\title{Optimal Rates for Robust Stochastic Convex Optimization}
\author{Changyu Gao}{University of Wisconsin-Madison, Madison, WI, USA}{ustcgcy@gmail.com}{}{}
\author{Andrew Lowy}{University of Wisconsin-Madison, Madison, WI, USA}{alowy@wisc.edu}{}{}
\author{Xingyu Zhou}{Wayne State University, Detroit, MI, USA}{xingyu.zhou@wayne.edu}{}{} %
\author{Stephen J. Wright}{University of Wisconsin-Madison, Madison, WI, USA}{swright@cs.wisc.edu}{}{} %
\authorrunning{C. Gao, A. Lowy, X. Zhou, and S.\,J. Wright}
\keywords{Adversarial Robustness, Machine Learning, Optimization Algorithms, Robust Optimization, Stochastic Convex Optimization } %
\newtheorem{assumption}[theorem]{Assumption}
\newcommand{\E}{\mathbb{E}}
\newcommand{\R}{\mathbb{R}}
\newcommand{\goodset}{S_{\text{good}}}
\newcommand{\argmin}{\operatornamewithlimits{arg\,min}}
\newcommand{\trace}{\operatorname{tr}}
\newcommand{\cov}{\operatorname{Cov}}
\newcommand{\srank}{\operatorname{r}}
\newcommand{\citet}[1]{\cite{#1}}
\newcommand{\citep}[1]{\cite{#1}}
\begin{document}
\maketitle

\begin{abstract}
  Machine learning algorithms in high-dimensional settings are highly susceptible to the influence of even a small fraction of structured outliers, making robust optimization techniques essential. In particular, within the \(\epsilon\)-contamination model, where an adversary can inspect and replace up to an \(\epsilon\)-fraction of the samples, a fundamental open problem is determining the optimal rates for robust stochastic convex optimization (SCO) under such contamination. We develop novel algorithms that achieve \emph{minimax-optimal} excess risk (up to logarithmic factors) under the \(\epsilon\)-contamination model. Our approach improves over existing algorithms, which are not only suboptimal but also require stringent assumptions, including Lipschitz continuity and smoothness of individual sample functions.
  By contrast, our optimal algorithms do not require these stringent assumptions, assuming only population-level smoothness of the loss.
  Moreover, our algorithms can be adapted to handle the case in which the covariance parameter is unknown, and can be extended to nonsmooth population risks via convolutional smoothing.
  We complement our algorithmic developments with a \emph{tight information-theoretic lower bound} for robust SCO.\@
\end{abstract}

\section{Introduction}
Machine learning models are increasingly deployed in security-critical applications, yet they remain vulnerable to data manipulation.
A particular threat is data poisoning, where adversaries deliberately insert malicious points into training data to degrade model performance \citep{biggio2012poisoning}.
Even in non-adversarial settings, naturally occurring outliers can significantly impact learning algorithms, especially in high-dimensional settings.
These challenges motivate our study of optimization algorithms for training machine learning models in the presence of outliers, both natural and adversarial.

Motivation for our work traces to Tukey's pioneering research on robust estimation \citep{tukey1960survey}.
Recent breakthroughs have produced efficient algorithms for high-dimensional robust estimation under the \(\epsilon\)-contamination model, where an adversary can arbitrarily replace up to an \(\epsilon\)-fraction of the samples.
Notable advances include polynomial-time algorithms for robust mean estimation in high dimensions \citep{diakonikolas2017being, diakonikolas2019robust}.
See \citet{diakonikolas2019recent} for a comprehensive survey of recent developments in high-dimensional robust estimation.

These developments in robust estimation naturally lead to a fundamental question: \emph{Can we solve {\em stochastic optimization\/} problems, under the \(\epsilon\)-contamination model?}
Stochastic optimization is used in machine learning to find the parameter that minimizes the population risk using training samples.
We focus specifically on robust stochastic optimization with convex objective functions whose gradients exhibit bounded covariance, a standard assumption in robust mean estimation \citep{diakonikolas2020outlier}.
While our goal aligns with the classical use of stochastic convex optimization in minimizing population risk, the presence of adversarial contamination introduces significant new challenges.

Prior research in robust optimization has concentrated primarily on narrow domains.
One line of work focuses on robust linear regression \citep{klivans2018efficient, diakonikolas2019efficient, cherapanamjeri2020optimal}.
While \citep{jambulapati2021robust,prasad2020robust} have explored general problems, they focus on robust {\em regression}.
To our best knowledge, SEVER \citep{diakonikolas2019sever} is the only work that considers general stochastic optimization problems.
However, this approach has several limitations that restrict the applicability of SEVER.
First, it focuses only on achieving dimension-independent error due to corruption, with only a suboptimal sample complexity.
Second, the results for SEVER depend on several stringent assumptions, including Lipschitzness and smoothness conditions on \emph{individual sample} functions.
\emph{Because of these limitations, optimal excess risk bounds for robust stochastic convex optimization, and under what conditions they can be achieved, remain unknown.}

In this work, we develop efficient algorithms for robust stochastic convex optimization that achieve \textbf{optimal excess risk bounds} (up to logarithmic factors) under the \(\epsilon\)-contamination model.
Notably, \cref{alg:robust_net} assumes only the smoothness of the population risk.
Moreover, we prove a matching lower bound to show the minimax-optimality of our algorithms.

\subsection{Problem Setup and Motivation}\label{sec:problem-setup}
\textbf{Notation.} 
For a vector \(v \in \R^d\), \(\| v \|\) denotes the \(\ell_2\) norm of \(v\).
For a matrix \(A \in \R^{d \times d}\), \(\| A \|\) denotes the spectral norm of \(A\).
For symmetric matrices \(A\) and \(B\), we write  \(A \preceq B\) if \(B - A\) is positive semidefinite (PSD).
We use \(\tilde O\) and \(\tilde \Omega\) to hide logarithmic factors in our bounds.

Let \(\mathcal{W} \subset \R^d\) be a closed convex set.
Consider a distribution \(p^{\ast}\) over functions \(f : \mathcal{W} \to \R\). Stochastic optimization aims to
find a parameter vector \(w^{\ast} \in \mathcal{W}\) minimizing the population risk \(\overline{f}(w) := \E_{f \sim p^{\ast}}[f(w)]\).
For example, function \(f\) can take the form of a loss function \(f_x(w)\) dependent on the data point \(x\), and the data distribution on \(x\) induces the function distribution \(p^{\ast}\).
In robust stochastic optimization, some data samples may be corrupted.
Following \citep{diakonikolas2019sever}, we  adopt the strong \(\epsilon\)-contamination model, which allows the adversary to replace up to \(\epsilon\) fraction of samples.
\begin{definition}[\(\epsilon\)-contamination model]\label{def:eps-contam}
  Given \(\epsilon > 0\) and a distribution \(p^{\ast}\) over functions \(f : \mathcal{W} \to \R\), data is generated as follows:
  first, \(n\) clean samples \(f_1, \ldots, f_{n}\) are drawn from \(p^{\ast}\).
  An \emph{adversary} is then permitted to examine the samples and replace up to \(\epsilon n\) of them with arbitrary samples.
  The algorithm is subsequently provided with this modified set of functions, which we refer to as {\em \(\epsilon\)-corrupted\/} samples (with respect to \(p^{\ast}\)).
\end{definition}
This model is strictly stronger than the Huber contamination model \citep{huber1964robust}, in which the samples are drawn from a mixture of the clean and adversarial distributions of the form \(p^{\ast} = (1-\epsilon)p + \epsilon q\), where \(p\) is the clean distribution and \(q\) is the adversarial distribution.

Our objective is to develop an efficient algorithm that minimizes the population risk \(\overline{f}(w)\), even when the data is \(\epsilon\)-corrupted.
The following is assumed throughout the paper.
\begin{assumption}\label{assump:conv_loss}
  ~\begin{enumerate}
    \item \(\mathcal{W} \subset \R^d\) is a compact convex set with diameter \(D\), that is, \(\sup_{w, w' \in \mathcal{W}} \|w - w'\| \le D\).
    \item \(f\) is differentiable almost surely. The population risk \(\overline{f}(w)\) is convex.
    \item The regularity condition holds \(\E_{f \sim p^*} [\nabla f(w)] = \nabla \overline f(w)\).\footnote{This technical assumption allows us to exchange the expectation and the gradient. See discussions in \cref{sec:regularity-condition}}
  \end{enumerate}
\end{assumption}

We also assume in  most results that the gradients of the functions have bounded covariance as in \citet{diakonikolas2019sever}, which is a typical assumption used in robust mean estimation.
\begin{assumption}\label{assump:cov}
  There is \(\sigma>0\) such that for all \(w \in \mathcal{W}\) and all unit vectors \(v\), we have \(\mathbf{E}_{f \sim p^*}[(v \cdot (\nabla f(w) - \nabla \overline{f}(w)))^2] \le \sigma^2\).
\end{assumption}
An equivalent form of this assumption is that for every \(w \in \mathcal{W}\),
the covariance matrix of the gradients, defined by \(\Sigma_w := \E_{f \sim p^*}[(\nabla f(w) - \nabla \overline{f}(w))(\nabla f(w) - \nabla \overline{f}(w))^T]\)
  satisfies \(\Sigma_w \preceq \sigma^2 I\).
  (See \cref{sec:discussion_assump} for a proof.)

We will additionally assume that the population risk \(\overline{f}(w)\) satisfies certain properties, or that certain properties are satisfied almost surely for functions \(f\) from distribution \(p^*\), as needed. %

To our best knowledge, SEVER \citep{diakonikolas2019sever} is the only work that studies robust stochastic optimization for general convex losses.
While SEVER focuses on finding approximate critical points,
our work focuses on minimizing the population risk \(\overline{f}(w)\), and we measure the performance of our algorithm in terms of the excess risk \(\overline{f}(\hat{w}) - \min_w \overline{f}(w)\), where \(\hat{w}\) is the output of the algorithm.

We remark that SEVER also derives excess risk bounds.
To contrast with SEVER, we decompose the excess risk of a stochastic optimization algorithm as follows\footnote{
  We omit the term due to optimization error that depends on the number of iterations of the algorithm, since it will be dominated by the other terms when we run the optimization algorithm for a sufficient number of iterations.
}:
\[
  \text{Excess risk} = \text{Error due to corruption} + \text{Statistical error},
\]
where ``error due to corruption'' refers to the error due to the presence of corruption in the data, while ``statistical error'' denotes the error that accrues even when there is no corruption.
SEVER \citep{diakonikolas2019sever} focuses only on the error due to corruption.
The statistical error term is implicit in their requirement on the sample complexity \(n\), that is,
\[
  \text{Excess risk} = \text{Error due to corruption}, \; \text{if } n \ge \, \text{[sample complexity]}.
\]
Specifically, they design a polynomial-time algorithm that achieves \(O(D \sigma \sqrt{\epsilon})\)  error due to corruption term  for \(n = \tilde \Omega\left(\frac{dL^2}{\epsilon \sigma^2} + \frac{dL^4}{\sigma^4}\right)\), provided that \(f - \overline f\) is \(L\)-Lipschitz and \(\beta\)-smooth almost surely for \(f \in p^*\), and that \(f\) is smooth almost surely.
(Their analysis has an incorrect sample complexity result. See \cref{app:sever-fix} for our correction.)
This sample complexity can be huge (even infinite), as some functions in the distribution may have a very large (possibly unbounded) Lipschitz constant. 
Moreover, SEVER implicitly requires \(f\) to be smooth almost surely.

Consider functions of the form \(f_x(w) = -\tfrac{1}{2} x \cdot \|w\|^2 \) for \(w\) such that \(\|w\| \le D\), where \(x \sim P\) for a probability distribution \(P\) with bounded mean and variance but with unbounded values, e.g.\ the normal distribution. 
We have \(\nabla f_x(w) = -x \cdot w\).  
Since \(x\) is unbounded, the worst-case Lipschitz parameter and smoothness of \(f\) are both infinite.
However, the population risk \(\overline f(w) = -\tfrac{1}{2}\|w\|^2 \cdot \mathbf{E}[x]\) is smooth and Lipschitz.
This example demonstrates that the assumptions in SEVER that assume properties uniformly for individual functions \(f \sim p^*\) can be too stringent. In this paper, we aim to answer the following question:
\begin{quote}
    Can we design computationally efficient algorithms that achieve the \emph{optimal excess risk} for robust SCO, under much milder conditions?
\end{quote}
We give positive answers to this question and summarize our contributions below.

\subsection{Our Contributions}
\begin{enumerate}
  \item Optimal Rates for Robust SCO (\cref{sec:robust_net}):
We develop algorithms that achieve the following minimax-optimal (up to logarithmic factors) excess risk:
\[\overline f(\hat w_T) - \min_{w \in \mathcal{W}} \overline f(w) = \tilde O \left(
    D
    \left(\sigma \sqrt{\epsilon} + \sigma \sqrt{\frac{d \log (1/\tau)}{n}}\, \right)
  \right).
\]
Compared with SEVER, we achieve the same error due to corruption \(O(D \sigma \sqrt{\epsilon})\) provided \(n = \tilde \Omega(d/\epsilon)\), a significant improvement in sample complexity.\footnote{
  We remark that in excess risk bounds, \(\tilde O\) always hides logarithmic factors only in the statistical error term, and the robust term is always \(O(D \sigma \sqrt{\epsilon})\).
}
\item Much Weaker Assumptions for Robust SCO:
\cref{alg:robust_net} achieves the optimal rates while only assuming the smoothness of the population risk, which is significantly weaker than the assumptions used in SEVER.\@ By contrast, SEVER requires \(f-\overline{f}\) to have bounded worst-case Lipschitz and smoothness parameter, and that individual functions \(f\) are smooth almost surely.

\item Handling unknown \(\sigma\) and extensions to nonsmooth case: Simple adaptations allow our algorithm to handle the case in which the covariance parameter \(\sigma\) is unknown.
We also extend our algorithm to nonsmooth population risks using convolutional smoothing.
The resulting algorithm achieves the minimax-optimal excess risk.

\item A Matching Lower Bound for Robust SCO:
We show a matching lower bound,
demonstrating that our excess risk bound is {\em minimax-optimal\/} (up to logarithmic factors). Consequently, our sample complexity for achieving the error due to corruption \(O(D \sigma \sqrt{\epsilon})\) is also minimax-optimal.
\item A Straightforward Algorithm for Robust SCO (\cref{sec:robust_gd}): \cref{alg:robust_gd} is an elementary algorithm that achieves the same optimal excess risk,
with more stringent assumptions compared to \cref{alg:robust_net}.\@ Our approach builds on the “many-good-sets” assumption, which SEVER briefly introduced without providing a concrete analysis.
\end{enumerate}

Our results might be surprising, as net-based approaches (e.g., uniform convergence) typically suffers from suboptimal error. Our results, however, imply that the net-based approach can indeed achieve the optimal excess risk under the \(\epsilon\)-contamination model. We discuss this further in \cref{sec:compare-covariance-variance}. 
A high-level summary of our results appears in \cref{tab:comparison}.

\begin{table}[htb]
  \centering
  \begin{tabular}{@{}cccc@{}}
    \toprule
    \textbf{Algorithm} & \textbf{Assumptions} & \textbf{Excess Risk} & \textbf{Sample Complexity} \\ \midrule
    SEVER \citep{diakonikolas2019sever} &
    \begin{tabular}[c]{@{}c@{}}
      1.\ \(f - \overline{f}\) is  \(L\)-Lipschitz a.s. \\  2.\ \(f\) is \(\beta\)-smooth a.s.
    \end{tabular}
    & suboptimal & \(\tilde \Omega\left(\frac{dL^2}{\epsilon \sigma^2} + \frac{dL^4}{\sigma^4}\right)\) \\ \cmidrule(lr){2-2}
    \cref{alg:robust_net}
    & \(\overline f\) is \(\bar\beta\)-smooth {\em or} \(\bar{L}\)-Lipschitz.
    & optimal
    & \(\tilde{\Omega}\left(d / \epsilon \right)\) \\ \cmidrule(lr){2-2}
    \cref{alg:robust_gd}
    & \begin{tabular}[c]{@{}c@{}}
1. \ \(f - \overline{f}\) is  \(L\)-Lipschitz a.s. and \(\beta\)-smooth a.s. \\ 2. \(\overline f\) is \(\bar\beta\)-smooth {\em or} \(\bar{L}\)-Lipschitz.
    \end{tabular}
    & optimal & \(\tilde{\Omega}\left(d / \epsilon \right)\) \\ \bottomrule
  \end{tabular}
  \caption{Comparison of assumptions, rates, and sample complexity of SEVER and our two algorithms. The parameters \(\beta\), \(L\), etc are all assumed to be finite. All algorithms assume \cref{assump:conv_loss}, and bounded covariance of the gradients, that is, the covariance matrix \(\Sigma_w\) satisfies \(\Sigma_w \preceq \sigma^2 I\) for all \(w\). Optimality is up to logarithmic factors.
  For the case when  \(\overline f\) is nonsmooth but Lipschitz (see \cref{sec:nonsmooth}), the excess risk is optimal (up to logarithmic factors) under the noncentral moment assumption.
  }\label{tab:comparison}
\end{table}

\section{Revisiting SEVER}\label{sec:sever}
In this section, we revisit SEVER \citep{diakonikolas2019sever} to motivate our work.
Below we fix the corruption parameter \(\epsilon\) and the covariance boundedness parameter \(\sigma > 0\).
Given \(\epsilon\)-corrupted function samples \(f_1, \ldots, f_n\), we say a subset of functions is ``good'' with respect to \(w\) if their sample mean and covariance at \(w\) are close to those of the true distribution, as defined below.

\begin{definition}[``Good'' set]\label{def:good-set}
  We say a set \(S_{\text{good}} \subseteq [n]\) with \(|S_{\text{good}}| \geq (1-\epsilon)n\) is ``good'' w.r.t. \(w\)  if the functions \(\{f_i\}_{i \in S_{\text{good}}}\) satisfy the following,
  \begin{equation}
    \begin{aligned}
      \left\| \frac{1}{|S_{\text{good}}|} \sum_{i \in S_{\text{good}}} \big(\nabla f_i(w) - \nabla \bar f(w)\big)\big(\nabla f_i(w) - \nabla \bar f(w)\big)^T \right\| & \leq O(\sigma^2), \\
      \left\|\frac{1}{|S_{\text{good}}|} \sum_{i \in S_{\text{good}}} (\nabla f_i(w) - \nabla \bar f(w)) \right\| & \leq O(\sigma\sqrt{\epsilon}).
    \end{aligned}
  \end{equation}
\end{definition}
A ``good'' set w.r.t.\ \(w\) allows us to robustly estimate the gradient at \(w\).
SEVER requires the existence of a set that is uniformly good for all \(w\), which we refer to as the ``uniform-good-set'' assumption.
\begin{assumption}[{``Uniform good set'', \cite[Assumption B.1]{diakonikolas2019sever}}]\label{assump:uniform-good-set}
  There exists a set \(S_{\text{good}} \subseteq [n]\)
  with \(|S_{\text{good}}| \geq (1-\epsilon)n\)
  such that \(S_{\text{good}}\) is ``good'' w.r.t. \(w\), for all \(w \in \mathcal{W}\).
\end{assumption}
SEVER operates through an iterative filtering framework built around a black-box learner. Its core algorithm consists of three main steps: (1) The black-box learner processes the current set of functions to find approximate critical points. (2) A filtering mechanism identifies and removes outlier functions. (3) The algorithm updates its working set with the remaining functions.
This process repeats until convergence. Crucially, SEVER's theoretical guarantees rely on its ``uniform-good-set'' assumption. Without this assumption (as opposed to ``many-good-sets'' assumption introduced later), the set of ``good'' functions can change at each iteration, potentially preventing the iterative filtering process from converging.

We argue that the ``uniform-good-set'' assumption can be too strong.
Recall that SEVER requires a sample complexity of \(n = \tilde \Omega\left(\frac{dL^2}{\epsilon \sigma^2} + \frac{dL^4}{\sigma^4}\right)\).
When \(n = \tilde \Omega(d/\epsilon)\), the ``uniform-good-set'' assumption can no longer be guaranteed to hold.
In contrast, the ``many-good-sets'' assumption introduced below is weaker, and aligns with the general framework of robustly estimating gradients in each iteration.

SEVER also assumes the existence of a black box approximate learner.
\begin{definition}[\(\gamma\)-approximate learner]
  A learning algorithm \(\mathcal{L}\) is called \(\gamma\)-approximate if, for any functions
  \(f_1,\ldots,f_m : \mathcal{W} \to \mathbb{R}\), each bounded below on a closed domain \(\mathcal{H}\),
  the output \(w\) of \(\mathcal{L}\) is a \(\gamma\)-approximate critical point of
  \(\hat f(x) := \frac{1}{m}\sum_{i=1}^m f_i(x)\),
  that is, there exists \(\delta > 0\) such that for all unit vectors \(v\) where \(w + \delta v \in \mathcal{W}\), we have that \(v \cdot \nabla \hat f(w) \geq -\gamma\).
\end{definition}
\begin{remark}
  We remark that the existence of a \(\gamma\)-approximate learner implies that the learner can find a \(\gamma\)-approximate critical point of any function \(f\) by choosing \(f_1 = \ldots = f_m = f\). To our best knowledge, any polynomial-time algorithm that finds approximate critical points requires smoothness of the objective.
  Therefore, SEVER does not apply to problems where some functions in the distribution are nonsmooth. For example, consider a distribution \(p^*\) consisted of two functions with equal probability, \(h + g\) and \(h - g\), where \(h\) is smooth but \(g\) is nonsmooth. The population risk is smooth, but the individual functions are not.
\end{remark}

In the appendix of \citet{diakonikolas2019sever}, the authors consider the ``many-good-sets'' assumption, an alternative weaker assumption that allows the good set to depend on the point \(w\).

\begin{assumption}[{``Many good sets'', \cite[Assumption D.1]{diakonikolas2019sever}}]\label{assump:many-good-sets}
  For every \(w\), there exists a set \( S_{\text{good}}(w)  \subseteq [n]\)
  with \(|S_{\text{good}}(w)| \geq (1-\epsilon)n\)
  such that \(S_{\text{good}}(w)\) is ``good'' with respect to \(w\). 
\end{assumption}
We remark that the ``many-good-sets'' assumption allows us to do robust gradient estimation in each iteration.
The SEVER paper mentions (without going into detail) that under the ``many-good-sets'' assumption, projected gradient descent can be used to find a \(O(\sigma \sqrt{\epsilon})\)-approximate critical point. 
It is unclear that under what conditions ``many-good-sets'' assumption can be satisfied, and no excess risk bound or sample complexity is provided.

In this paper, we utilize a further relaxed assumption stated below, which only requires the existence of good sets at points in a fine net of the domain.
For these purposes, we define a \(\xi\)-net of \(\mathcal{W}\) (for some small \(\xi>0\)) to be a set \(\mathcal{C}\)  such that for any \(w \in \mathcal{W}\), there exists \(w' \in \mathcal{C}\) with \(\|w - w'\| \le \xi\).
\begin{assumption}[{``Dense good sets''}]\label{assump:dense-good-sets}
  For a given \(\xi>0\), there exists a \(\xi\)-net \(\mathcal{C}\)  of the domain \(\mathcal{W}\)
  such that for every \(w \in \mathcal{C}\), there exists a set \(S_{\text{good}}(w)  \subseteq [n]\) with \(|S_{\text{good}}(w)| \geq (1-\epsilon)n\)
  such that \(S_{\text{good}}(w)\) is ``good'' with respect to \(w\).
 
\end{assumption}
When the  ``dense good sets'' assumption holds, we can approximate the gradient at any point in the domain \(\mathcal{W}\) by robustly estimating the gradient at the nearest point in the net.
The approximation error will be small, provided that the population risk is smooth and the net is fine enough. 
(The parameter \(\xi\) will depend on \(\sigma\), as we will see in \cref{alg:robust_net}.)
This relaxed assumption allows us to circumvent the technical difficulties of dealing with infinitely many \(w\), thus removing the requirements of uniform Lipschitzness and smoothness of \(f - \overline{f}\) for all \(f\) that are used in SEVER.\@
As a consequence, we are able to achieve the same corruption error as SEVER with a \emph{significantly reduced} sample complexity. The next section presents our algorithm that achieves this result.

\section{Optimal Rates for Robust SCO under Weak Distributional Assumptions}\label{sec:robust_net}

We now present a net-based algorithm that achieves the minimax-optimal excess risk under the weak assumption that the population risk \(\overline{f}\) is smooth.

\begin{assumption}\label{assump:smooth-bounded-var-new}
Given the distribution  \(p^*\) over functions \(f: \mathcal{W} \rightarrow \mathbb{R}\) with \(\overline{f} = \mathbf{E}_{f\sim p^*}[f]\), we have that 
    \(\overline{f}\) is \(\bar \beta\)-smooth. %
\end{assumption}

Here, the \(\beta\)-smoothness requirement only applies to the population risk. Each individual function \(f\) can have different smoothness parameters.

We outline our algorithm below, see \cref{alg:robust_net}. The algorithm is based on projected gradient descent with a robust estimator. Here, we treat the robust gradient estimator \(\texttt{RobustEstimator}\) as a black box, which can be any deterministic stability-based algorithm. For completeness, we provide an instantiation of the robust gradient estimator due to \citet{diakonikolas2020outlier}, outlined in \cref{alg:filtering}, which at a high level iteratively filters out points that are ``far'' from the sample mean in a large variance direction. \cref{alg:filtering} runs in polynomial time.

\textbf{The key innovation lies in its gradient estimation strategy.} 
Rather than computing gradients at arbitrary points, it makes use a dense net of the domain \(\mathcal{W}\), estimating the gradient at the current iterate \(w\) by the gradient at the nearest point \(w'\) in the net to \(w\).
The smoothness of the population risk ensures that this approximation remains accurate.
As mentioned at the end of \cref{sec:sever}, this strategy helps us avoid the technical challenges of handling infinitely many \(w\) with a net argument, thereby achieving optimal rates under significantly weaker distributional assumptions compared to SEVER.\@
\begin{algorithm}[H]
  \caption{Net-based Projected Gradient Descent with Robust Gradient Estimator}\label{alg:robust_net}
  \begin{algorithmic}[1]
    \State \textbf{Input:} \(\epsilon\)-corrupted set of functions \(f_1, \ldots, f_n\), stepsize parameters \(\{\eta_t\}_{t \in [T]}\), robust gradient estimator \(\texttt{RobustEstimator}\), \((\sigma \sqrt{\epsilon} / \bar \beta)\)-net of \(\mathcal{W}\) denoted by \( \mathcal{C} \).
    \State Initialize \(w_0 \in \mathcal{W}\) and \(t = 1\).
    \For{\(t \in [T]\)}
    \State Let \(w_{t-1}' := \argmin_{w \in \mathcal{C}} \| w - w_{t-1} \|\) denote the closest point to \(w_{t-1}\) in the net.
    \State Robustly estimate gradient at \(w_{t-1}'\) 
    by running the \texttt{RobustEstimator} on samples \(\left\{ \nabla f_i(w_{t-1}') \right\}_{i=1}^n\), which outputs \(\tilde g_t\).
    \State \(w_t \gets \text{Proj}_{\mathcal{W}}(w_{t-1} - \eta_t \tilde g_t)\).
    \EndFor
    \State \textbf{Output:} \(\hat w_T = \frac{1}{T} \sum_{t=1}^T w_t\).
  \end{algorithmic}
\end{algorithm}

\begin{algorithm}
  \caption{An Instantiation of the Robust Gradient Estimator: Iterative Filtering \citet{diakonikolas2020outlier}}
  \begin{algorithmic}[1]\label{alg:filtering}
    \State \textbf{Input:} \(\epsilon\)-corrupted set \(S \subset \R^d\) of \(n\) samples.
    \State Initialize weight function \(h: S \to \mathbb{R}_{\geq 0}\) with \(h(x) = 1/|S|\) for all \(x \in S\).
    \While{\(\|h\|_1 \ge 1 - 2\epsilon\)}
    \State Compute \(\mu(h) = \frac{1}{\|h\|_1}\sum_{x\in S}h(x)x\).
    \State Compute \(\Sigma(h) = \frac{1}{\|h\|_1}\sum_{x\in S}h(x)(x-\mu(h))(x-\mu(h))'\).
    \State Compute approximate largest eigenvector \(v\) of \(\Sigma(h)\).
    \State Define \(g(x) = |v \cdot (x-\mu(h))|^2\) for all \(x \in S\).
    \State Find largest \(t\) such that \(\sum_{x\in S:g(x)\geq t}h(x) \geq \epsilon\).
    \State Define \(g_{\ge t}(x) =
      \begin{cases}
        g(x) & \text{if } g(x) \geq t \\
        0 & \text{otherwise}
    \end{cases}\).
    \State Let \(m = \max\{g_{\ge t}(x) : x \in S, h(x) \neq 0\}\).
    \State Update \(h(x) \gets h(x)(1-g_{\ge t}(x)/m)\) for all \(x \in S\).
    \EndWhile
    \State \textbf{Output:} \(\mu(h)\).
  \end{algorithmic}
\end{algorithm}

\textbf{Efficient Implementation:}
For implementation efficiency, we propose a grid-based net construction. Let \(\xi = \sigma \sqrt{\epsilon} / \bar \beta\). We use grid points spaced \(\xi/\sqrt{d}\) apart in each dimension, i.e.,
\[\left\{
    \frac{\xi}{\sqrt{d}} \cdot z = \left( \frac{\xi}{\sqrt{d}}
      \cdot z_1, \frac{\xi}{\sqrt{d}}
      \cdot z_2, \ldots, \frac{\xi}{\sqrt{d}}
    \cdot z_d \right) : z = (z_1, z_2, \ldots, z_d) \in \mathbb{Z}^d, \left\|\frac{\xi}{\sqrt{d}} \cdot z \right\|_2 \le D
\right\}\]
to construct a \(\xi\)-net.\footnote{Technically, we can choose a grid spaced \(\xi/\sqrt{4d}\) apart in each dimension, and add additional points to cover the boundary of the feasible set. This would reduce the size of grid points by almost a factor of \(2^d\).}
Given a point \(w\), we can find a net point within \(\xi\) distance in \(O(d)\) time through:
(1) Scaling: Divide \(w\) by \(\xi/\sqrt{d}\).
(2) Rounding: Convert to the nearest integral vector in \(\mathbb{Z}^d\).
(3) Rescaling: Multiply by \(\xi/\sqrt{d}\).

This construction yields a net of size \(|\mathcal{C}| = O\left(D \sqrt{d}/\xi \right)^d\), which is larger than the optimal covering number \(O((D/\xi)^d)\). While this introduces an extra \(\log d\) factor in the excess risk bound (due to union bound over net points), it offers two significant practical advantages: (1) Implicit net: No need to explicitly construct and store the net. (2) Efficient computation: \(O(d)\) time for finding the nearest net point. An exponential-time algorithm that achieves the excess risk without the \(\log d\) factor is described in \cref{app:exponential-time-alg}.

\textbf{Polynomial runtime:} The robust gradient estimator in \cref{alg:filtering} runs in polynomial time, when used with the grid-based construction above.
As can be seen in \cref{proof:robust-net}, the required number of iterations is also polynomial in parameters. Therefore, the algorithm runs in polynomial time overall.

Convergence of \cref{alg:robust_net} is described  in the following result.

\begin{theorem}\label{thm:robust_net}
  Suppose that  \cref{assump:conv_loss}, \cref{assump:cov}, and \cref{assump:smooth-bounded-var-new} hold. 
  There are choices of stepsizes \(\{\eta_t\}_{t=1}^T\) and \(T\) such that, with probability at least \(1-\tau\), we have
  \[\overline f(\hat w_T) - \min_{w \in \mathcal{W}} \overline f(w) = \tilde O\left(
    \sigma D \sqrt{\epsilon} + \sigma D \sqrt{\frac{d \log (1/\tau)}{n}}
  \, \right).\]
  As a consequence, the algorithm achieves excess risk of \(O(D \sigma\sqrt{\epsilon})\) with high probability whenever \(n = \tilde \Omega(d / \epsilon)\).
\end{theorem}
\begin{remark}
  \cref{thm:robust_net} is minimax-optimal (up to logarithmic factors).
  Our sample complexity \(n = \tilde \Omega(d/\epsilon)\), significant improves over the sample complexity of SEVER, which is  \(
    n = \tilde \Omega\left(
      \frac{dL^2}{\epsilon \sigma^2} + \frac{dL^4}{\sigma^4}
    \right)
  \).
\end{remark}

In \cref{app:lower-bound}, we derive the following matching lower bound, showing the minimax-optimality (up to logarithmic factors) of \cref{alg:robust_net}.
The following matching lower bound can be established, showing the minimax-optimality (up to logarithmic factors) of \cref{alg:robust_net}.
\begin{theorem}\label{thm:lower-bound-robust-stochastic-optimization}
  For \(d \ge 140\) and \(n \ge 62500\), there exist a closed bounded set \(\mathcal{W} \subset \mathbb{R}^d\) with diameter at most \(D\) and a distribution \(p^*\) over functions \(f: \mathcal{W} \rightarrow \mathbb{R}\) that satisfy the following: Let \(\overline{f} = \mathbf{E}_{f\sim p^*}[f]\). We have that for every \(w \in \mathcal{W}\) and unit vector \(v\) that \(\mathbf{E}_{f \sim p^*}[(v \cdot (\nabla f(w) - \nabla \overline{f}(w)))^2] \le \sigma^2\). Both \(f\) (almost surely) and \(\overline{f}\) are convex, Lipschitz and smooth. The output \(\hat w\) of any algorithm with access to an \(\epsilon\)-corrupted set of functions \(f_1, \dots, f_n\) sampled from \(p^*\) satisfies the following with probability at least \(1/2\):
  \begin{equation}\label{eq:lower-bound-robust-stochastic-optimization}
    \overline f(\hat w) - \min_{w \in \mathcal{W}} \overline f(w) = \Omega\left(D \sigma \sqrt{\epsilon} + D \sigma \sqrt{\frac{d}{n}} \right).
  \end{equation}
\end{theorem}

\subsection{Proof Sketch of Theorem~\ref{thm:robust_net}~}\label{sec:proof-robust-net}
We defer the full proof to \cref{app:proof_alg1} and sketch the proof below.
In each iteration of \cref{alg:robust_net}, we estimate the gradient at the current iterate \(w\) by applying the robust gradient estimator to its nearest point in the net \(w'\). We can decompose the error as follows:
\[
\|\tilde g(w') - \nabla \overline{f}(w)\| \le \|\tilde g(w') - \nabla \overline{f}(w')\| + \|\nabla \overline{f}(w') - \nabla \overline{f}(w)\|,
\]
where the first term measures the bias of the robust gradient estimator, and the second term is due to the approximation error due to the net.

We will show that there exist good sets for all net (grid) points (cf. \cref{assump:dense-good-sets}) with high probability, so that we can robustly estimate gradients for all points in the net. This gives a bound for the first term in the equation above, whereas the second term can be bounded using smoothness of the population risk \(\overline{f}\).

Once we establish the gradient estimation bias in each iteration (with high probability), we use the projected biased gradient descent analysis framework (\cref{sec:robust_gd}) to establish an upper bound on the excess risk.

\subsection{Handling Nonsmooth but Lipschitz Population Risks}\label{sec:nonsmooth}

We now consider a setting in which  \cref{assump:conv_loss} and  \cref{assump:cov} hold, but  \(\overline f\) is nonsmooth in the sense that \cref{assump:smooth-bounded-var-new} is not satisfied. 
That is, there is no \(\bar\beta<\infty\) such that \(\overline{f}\) is \(\bar\beta\)-smooth. 
We assume instead that \(\overline{f}\) is \(\bar L\)-Lipschitz for some finite \(\bar{L}\).
In this setting, we can use convolutional smoothing and run \cref{alg:robust_net} on the smoothed objective.
Our algorithm works as follows:
\begin{enumerate}
  \item For every index \(i \in [n]\), we independently sample a perturbation \(u_i \sim \mathcal{U}_s\),
  where \(\mathcal{U}_s\) is the uniform distribution over the \(d\)-dimensional \(L^2\)-norm ball of radius \(s\) centered at the origin.
  We replace samples \(\{f_i\}_{i=1}^n\) by the smoothed samples \(\{f_i(\cdot+u_i)\}_{i=1}^n\).
  \item We run \cref{alg:robust_net} on the smoothed samples with \(\beta\) replaced by \(\frac{\bar L \sqrt{d}}{s}\) and \(\sigma\) replaced by \(\sqrt{\sigma^2 + 4 \bar L^2}\).
\end{enumerate}
The modified algorithm has the following convergence guarantees.
\begin{proposition}\label{thm:nonsmooth}
Suppose that \cref{assump:conv_loss} and \cref{assump:cov} hold and that  \(\overline f\) is \(\bar L\)-Lipschitz. 
There are choices of algorithmic parameters
   such that, with probability at least \(1-\tau\), the output \(\hat w_T\) of the modified algorithm satisfies the following excess risk bound:
  \[\overline f(\hat w_T) - \min_{w \in \mathcal{W}} \overline f(w) = \tilde O \left(
      (\sigma + \bar L) D \sqrt{\epsilon} +
      (\sigma + \bar L) D \sqrt{\frac{d \log (1/\tau)}{n}}
  \, \right).\]
\end{proposition}
\begin{remark}
  Compared to the smooth case, the excess risk bound has an extra \(\bar L\) term. Using this result, we can show that under the alternative noncentral moment assumption, that is,
  instead of \cref{assump:cov}, assume that for every \(w \in \mathcal{W}\) and unit vector \(v\), \(\mathbf{E}_{f \sim p^*}[(v \cdot \nabla f(w))^2] \le G^2\), our modified algorithm achieves the following excess risk bound.
\end{remark}

\begin{theorem}\label{thm:nonsmooth_noncentral}
Suppose that \cref{assump:conv_loss} holds, and that for every \(w \in \mathcal{W}\) and unit vector \(v\), we have \(\mathbf{E}_{f \sim p^*}[(v \cdot \nabla f(w))^2] \le G^2\) for some \(G\). 
There are choices of algorithmic parameters
   such that, with probability at least \(1-\tau\), the output \(\hat w_T\) of the modified algorithm satisfies the following excess risk bound:
  \[\overline f(\hat w_T) - \min_{w \in \mathcal{W}} \overline f(w) = \tilde O \left(
      G D \sqrt{\epsilon} +
      G D \sqrt{\frac{d \log (1/\tau)}{n}}
  \, \right).\]
\end{theorem}
See \cref{app:nonsmooth} for proofs of both results above.

We can show that the excess risk bound in \cref{thm:nonsmooth_noncentral}
is minimax-optimal (up to logarithmic factors) under the noncentral moment assumption, as a matching lower bound can be established.

\begin{theorem}\label{thm:lower-bound_noncentral}
  For \(d \ge 140\) and \(n \ge 62500\), there exist a closed bounded set \(\mathcal{W} \subset \mathbb{R}^d\) with diameter at most \(D\), and a distribution \(p^*\) over functions \(f: \mathcal{W} \rightarrow \mathbb{R}\) that satisfy the following: Let \(\overline{f} = \mathbf{E}_{f\sim p^*}[f]\). We have that for every \(w \in \mathcal{W}\) and unit vector \(v\) that \(\mathbf{E}_{f \sim p^*}[(v \cdot \nabla f(w))^2] \le G^2\). Both \(f\) (almost surely) and \(\overline{f}\) are convex, Lipschitz and smooth. The output \(\hat w\) of any algorithm with access to an \(\epsilon\)-corrupted set of functions \(f_1, \dots, f_n\) sampled from \(p^*\) satisfies the following with probability at least \(1/2\),
  \begin{equation}
    \overline f(\hat w) - \overline f^* = \Omega\left(D G \sqrt{\epsilon} + D G \sqrt{\frac{d}{n}} \right).
  \end{equation}
\end{theorem}

The proof is essentially the same as that of \cref{thm:lower-bound-robust-stochastic-optimization}, since the same hard instances that are used to establish \cref{thm:lower-bound-robust-stochastic-optimization} (c.f. \cref{app:lower-bound}) can be reused to establish \cref{thm:lower-bound_noncentral}.

\subsection{Handling Unknown Covariance Parameter \texorpdfstring{\(\sigma\)}{sigma}}\label{sec:unknown-sigma}
In \cref{alg:robust_net}, \(\sigma\) primarily affects the fineness of the net through \(\xi = \sigma \sqrt{\epsilon} / \bar \beta\). When \(\sigma\) is unknown, we can adapt our algorithm with a preprocessing step to estimate \(\sigma\): (1) Run iterative filtering (\cref{alg:filtering}) to obtain a lower bound \(\hat{\sigma}\) of \(\sigma\), and (2) Run \cref{alg:robust_net} with the modified fineness parameter \(\xi = \hat{\sigma} \sqrt{\epsilon} / \bar \beta\). This adaptation preserves the optimal excess risk guarantees for the known \(\sigma\) case. Full details are provided in \cref{app:unknown-sigma}.

\section{Projected Gradient Descent with Robust Gradient Estimator}\label{sec:robust_gd}

\cref{alg:robust_net} uses a net-based approach to estimate gradients robustly. A more naïve approach is to directly estimate gradients at arbitrary points using a robust gradient estimator. We will show that the simple projected gradient descent algorithm can achieve the same optimal rate as \cref{alg:robust_net} under stronger assumptions.
Even so, our new assumptions are still slightly weaker than those used in SEVER~\citet{diakonikolas2019sever}.
Concretely, following assumptions on the distribution over functions are assumed.
\begin{assumption}\label{assump:smooth-bounded-var-sever}
  Let \(p^*\) be a distribution over functions \(f: \mathcal{W} \rightarrow \mathbb{R}\) with \(\overline{f} = \mathbf{E}_{f\sim p^*}[f]\) so that:
  \begin{enumerate}
    \item \(f - \overline{f}\) is \(L\)-Lipschitz and \(\beta\)-smooth almost surely, where\footnote{
      Without loss of generality, see \cref{sec:discussion_assump}.
    }
     \(L \ge \sigma\).
    \item \(\overline f\) is \(\bar \beta\)-smooth or \(\bar L\)-Lipschitz. 
  \end{enumerate}
\end{assumption}
We use bars on the constants \(\bar\beta\) and $\bar L$  to emphasize that they reflect properties of \(\overline f\).

\cref{alg:robust_gd} follows the ``many-good-sets'' assumption. We are able to robustly estimate the gradient of the population risk \(\overline{f}\) at any point \(w\) with high probability, at the cost of requiring additional almost-sure assumptions on \(f-\overline{f}\) compared to \cref{alg:robust_net}.

\begin{algorithm}[H]
  \caption{Projected Gradient Descent with Robust Gradient Estimator}\label{alg:robust_gd}
  \begin{algorithmic}[1]
    \State \textbf{Input:} \(\epsilon\)-corrupted set of functions \(f_1, \ldots, f_n\), stepsize parameters \(\{\eta_t\}_{t \in [T]}\), robust gradient estimator \(\texttt{RobustEstimator}(w)\).
    \State Initialize \(w_0 \in \mathcal{W}\) and \(t = 1\).
    \For{\(t \in [T]\)}
    \State Apply robust gradient estimator to get \(\tilde g_t = \texttt{RobustEstimator}(w_{t-1})\).
    \State \(w_t \gets \text{Proj}_{\mathcal{W}}(w_{t-1} - \eta_t \tilde g_t)\).
    \EndFor
    \State \textbf{Output:} \(\hat w_T = \frac{1}{T} \sum_{t=1}^T w_t\).
  \end{algorithmic}
\end{algorithm}

\cref{alg:robust_gd} achieves the same optimal excess risk bounds as in \cref{thm:robust_net}.
\begin{theorem}\label{thm:robust-gd}
  Suppose that \cref{assump:conv_loss}, \cref{assump:cov},  and \cref{assump:smooth-bounded-var-sever} hold. 
  There are choices of stepsizes \(\{\eta_t\}_{t=1}^T\) and \(T\) such that, with probability at least \(1-\tau\), we have
  \[\overline f(\hat w_T) - \min_{w \in \mathcal{W}} \overline f(w) = \tilde O \left(
      \sigma D \sqrt{\epsilon} + \sigma D \sqrt{\frac{d \log (1/\tau)}{n}} \,
  \right).\]
  As a consequence, the algorithm achieves excess risk of \(O(D \sigma\sqrt{\epsilon})\) with high probability whenever \(n = \tilde \Omega(d / \epsilon)\).
  The expected excess risk is bounded by \(\tilde O \left(
      \sigma D \sqrt{\epsilon} + \sigma D \sqrt{d/n}
  \right)\).

\end{theorem}

The proof is based on the net argument, similar to that of Lemma C.5 in \citet{li2024robust}.
The high level idea is as follows: For simplicity, we say \(w\) is ``good'' if there exists a good set of functions at \(w\).
We will show that with high probability, there exists a good set for all \(w\) (cf. \cref{assump:many-good-sets}), so that we can robustly estimate the gradient at all \(w\). To show this, we employ a net argument, based on the claim that if \(w\) is ``good'', then all points in a small neighborhood of \(w\) are also ``good.''
By the union bound, with high probability, all points in the net are ``good''. Then it follows that all \(w\) are ``good''.
The full proof can be found in \cref{app:proof_alg2}.

\section{Conclusion and Future Work}\label{sec:conclusion}
In this work, we have advanced robust stochastic convex optimization under the \(\epsilon\)-contamination model. While the prior state of the art SEVER \citep{diakonikolas2019sever} focused on finding approximate critical points under stringent assumptions, we have developed algorithms that directly tackle population risk minimization, obtaining the optimal excess risk under more practical assumptions.
Our first algorithm (\cref{alg:robust_net}) achieves the minimax-optimal excess risk by leveraging our relaxed ``dense-good-sets'' assumption and estimating gradients only at points in a net of the domain, relaxing the stringent distributional conditions as required in SEVER.\@
Our second algorithm (\cref{alg:robust_gd}) provides a simple projected gradient descent approach that achieves the same optimal excess risk, making use of the “many-good-sets” assumption briefly noted in \citep{diakonikolas2019sever}.
Both of our algorithms significantly reduce sample complexity compared to SEVER.

For future work, it would be interesting to explore following directions:
(1) Our excess risk bound is tight up to logarithmic factors. Can we improve the bound to remove the logarithmic factors? (2) Our lower bound is with constant probability. Is it possible to derive a lower bound that the includes \(\log(1/\tau)\) term with probability \(\tau\)?
(3) Robustness has been shown to be closely related to differential privacy \citep{hopkins2023robustness}.
Can we design optimization algorithms that are both robust and differentially private?

\bibliography{ref}

\appendix

\section{Fixing SEVER's Sample Complexity Result}\label{app:sever-fix}
In this section, we fix SEVER's sample complexity result in their Proposition B.5.
Their proof is incorrect due to the error while applying Hoeffding's inequality, which leads to a wrong sample complexity bound (\(n = \tilde \Omega(dL^2/(\epsilon \sigma^2))\)).

We will provide a correct, more rigorous proof for their result.
The correct bound is worse than the one claimed in their paper, as we show below.

\begin{lemma}[{Fixed from Proposition B.5 in~\cite{diakonikolas2019sever}}]\label{lem:sever-cond-uniform}
  Let \(p^*\) be a distribution over functions \(f: \mathcal{W} \rightarrow \mathbb{R}\) with \(\overline{f} = \mathbf{E}_{f\sim p^*}[f]\) so that \(f - \overline{f}\) is \(L\)-Lipschitz and \(\beta\)-smooth almost surely.  Assume further that for every \(w \in \mathcal{W}\) and unit vector \(v\) that \(\mathbf{E}_{f \sim p^*}[(v \cdot (\nabla f(w) - \nabla \overline{f}(w)))^2] \le \sigma^2\). If
  \(
    n = \tilde \Omega\left(
      \frac{dL^2}{\epsilon \sigma^2} + \frac{dL^4}{\sigma^4}
    \right),
  \)
  then with high probability, any an \(\epsilon\)-corrupted set of functions \(f_1, \dots, f_n\) (with respect to \(p^{\ast}\)) satisfy \cref{assump:uniform-good-set}.
\end{lemma}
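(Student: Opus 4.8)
Take $\goodset$ to be the set of indices of the uncorrupted samples, so $|\goodset| \ge (1-\epsilon)n =: m$ and, up to the standard caveat that the adversary picks which clean samples survive, $\{f_i\}_{i\in\goodset}$ behaves like $m$ i.i.d.\ draws from $p^*$. Write $v_i(w) := \nabla f_i(w) - \nabla\overline f(w)$. The $L$-Lipschitzness of $f-\overline f$ gives the \emph{deterministic} bound $\|v_i(w)\| \le L$ for all $i,w$, and \cref{assump:cov} gives $\E[v_i(w)v_i(w)^T] = \Sigma_w \preceq \sigma^2 I$. It suffices to show that, with probability at least $1-\tau$, both inequalities of \cref{def:good-set} hold for $\{f_i\}_{i\in\goodset}$ at every $w\in\mathcal{W}$ simultaneously; the constant factors lost below are either already permitted by \cref{def:good-set} or can be absorbed by inflating $\sigma$ by an absolute constant.

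\emph{Concentration at a fixed $w$.} For the mean inequality, $\E[v_i(w)]=0$, $\|v_i(w)\|\le L$, and $\E\|\tfrac1m\sum_i v_i(w)\|^2 = \tfrac1m\trace(\Sigma_w)\le \sigma^2 d/m$, so bounded-differences concentration (McDiarmid's inequality) of $\|\tfrac1m\sum_i v_i(w)\|$ around its mean gives, with probability $\ge 1-\delta$,
\[
  \Bigl\|\tfrac1m\textstyle\sum_i v_i(w)\Bigr\| \;\le\; \sigma\sqrt{d/m} + L\sqrt{2\log(1/\delta)/m}.
\]
For the covariance inequality, the $v_i(w)v_i(w)^T$ are i.i.d., PSD, with operator norm $\le L^2$ and mean $\Sigma_w$, so a matrix Hoeffding/Azuma bound gives, with probability $\ge 1-\delta$,
\[
  \Bigl\|\tfrac1m\textstyle\sum_i v_i(w)v_i(w)^T - \Sigma_w\Bigr\| \;=\; O\!\bigl(L^2\sqrt{\log(d/\delta)/m}\bigr),
\]
hence $\|\tfrac1m\sum_i v_i(w)v_i(w)^T\| \le \sigma^2 + O(L^2\sqrt{\log(d/\delta)/m})$. (Matrix Bernstein would improve the eventual $L^4$ to $L^2$ here, but only sufficiency is needed to match the stated bound.)

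\emph{Net argument and accounting.} Since $f_i-\overline f$ is $\beta$-smooth a.s., each $w\mapsto v_i(w)$ is $\beta$-Lipschitz, so $w\mapsto\tfrac1m\sum_i v_i(w)$ is $\beta$-Lipschitz and $w\mapsto\tfrac1m\sum_i v_i(w)v_i(w)^T$ is $2\beta L$-Lipschitz in operator norm (using $\|aa^T-bb^T\|\le\|a-b\|(\|a\|+\|b\|)$). Fix a $\gamma$-net $\mathcal N$ of $\mathcal W$ with $|\mathcal N|\le(3D/\gamma)^d$ and $\gamma$ a small constant times $\min\{\sigma\sqrt\epsilon/\beta,\ \sigma^2/(\beta L)\}$; then controlling the two quantities on $\mathcal N$ to within $\tfrac12\sigma\sqrt\epsilon$ and $\tfrac12\sigma^2$ of their net-point values controls them on all of $\mathcal W$ within $\sigma\sqrt\epsilon$ and $\sigma^2$. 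Apply the two displays with $\delta = \tau/(2|\mathcal N|)$ and union bound over $\mathcal N$, so that $\log(1/\delta) = O(d\log(3D/\gamma) + \log(1/\tau)) = \tilde O(d)$. The mean inequality then requires $m$ at least a constant times $d/\epsilon$ and at least a constant times $L^2\log(1/\delta)/(\sigma^2\epsilon)$, i.e.\ $m = \tilde\Omega(dL^2/(\epsilon\sigma^2))$ (using $L\ge\sigma$); the covariance inequality requires $m$ at least a constant times $L^4\log(d/\delta)/\sigma^4$, i.e.\ $m = \tilde\Omega(dL^4/\sigma^4)$. As $m=(1-\epsilon)n$, taking $n = \tilde\Omega(dL^2/(\epsilon\sigma^2) + dL^4/\sigma^4)$ makes both hold with probability $\ge 1-\tau$, which is exactly \cref{assump:uniform-good-set}.

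\emph{Main obstacle.} The delicate step is the reduction in the first paragraph: the surviving clean samples are selected by the adversary after seeing the data, so $\{f_i\}_{i\in\goodset}$ is not literally i.i.d.\ A rigorous treatment needs a concentration statement holding simultaneously for \emph{all} subsets of size $\ge(1-\epsilon)n$ of the original i.i.d.\ sample (exploiting that $\|v_i(w)\|\le L$ deterministically for every $i$), or a conditioning argument in the style of the resampling lemmas of robust statistics. A minor additional nuisance is the constant-factor gap between the fluctuation of the empirical covariance above $\Sigma_w$ and the exact threshold $\sigma^2$ in \cref{def:good-set}, which is handled by a constant inflation of $\sigma$.
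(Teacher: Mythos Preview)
Your overall skeleton (Hoeffding-type concentration at fixed $w$, then a $\beta$- and $\beta L$-Lipschitz net argument over $\mathcal W$, then union bound) matches the paper's proof, and your accounting for the two sample-complexity terms is correct. The gap is exactly the one you flag under \emph{Main obstacle}, and your proposed repairs do not close it. Union-bounding over all $(1-\epsilon)n$-subsets of the clean sample costs an extra $\log\binom{n}{\epsilon n}=\Theta(\epsilon n\log(1/\epsilon))$ in $\log(1/\delta)$; plugging this into your mean bound $L\sqrt{\log(1/\delta)/m}\le \sigma\sqrt\epsilon$ with $m\approx n$ forces $\sigma^2/L^2\gtrsim \log(1/\epsilon)$, which fails whenever $L$ is noticeably larger than $\sigma$ --- precisely the regime the lemma is meant to cover. ``Resampling lemmas'' is too vague to count as a fix.

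The paper handles the adversarial selection much more simply: it proves both concentration bounds for the \emph{full} i.i.d.\ sample $T$ (no adversary), and then transfers them \emph{deterministically} to any subset $\goodset\subset T$ of size $\ge(1-\epsilon)n$. For the second-moment inequality this is trivial since the summands $(v\cdot v_i(w))^2$ are nonnegative, so dropping $\epsilon n$ of them and renormalizing only inflates the average by $1/(1-\epsilon)$. For the mean inequality the naive bound ``each dropped term is at most $L$'' is too weak (it gives $L\epsilon$, not $\sigma\sqrt\epsilon$); instead the paper uses Cauchy--Schwarz together with the already-established second-moment bound on $T$: for any $S_1\subset T$ with $|S_1|\le\epsilon n$,
\[
\Bigl|\tfrac1n\!\sum_{i\in S_1} v\!\cdot\! v_i(w)\Bigr|\;\le\;\tfrac1n\sqrt{|S_1|}\,\sqrt{\textstyle\sum_{i\in T}(v\!\cdot\! v_i(w))^2}\;\le\;\tfrac1n\sqrt{\epsilon n}\,\sqrt{O(\sigma^2)n}\;=\;O(\sigma\sqrt\epsilon).
\]
This is the one non-obvious idea you are missing; once you have it, no union bound over subsets and no resampling are needed.
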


\begin{proof}
  For any set of functions \(A = \{f_i\}_{i \in A}\) and functional \(G\), we denote \(\E_{i \in A}[G(f_i)]\) as the empirical average of \(G(f_i)\) over \(A\), i.e., \(\frac{1}{|A|} \sum_{i \in A} G(f_i)\).
  Let \(T\) denote the original uncontaminated samples, and let \(S\) denote the \(\epsilon\)-contaminated samples of \(T\).
  Let \(\goodset \subset S\) be the set of uncorrupted functions \(f_i\).
  It is then the case that \(\goodset \subset T\) and that \(|\goodset|\geq (1-\epsilon)n\).

  Let \(\mathcal{N}\) denote a \(\epsilon\)-net of the unit sphere with size \(|\mathcal{N}| = \exp(\tilde O(d))\). Let \(\mathcal{C}\) be a \(\gamma\)-net of \(\mathcal{W}\) with size \(|\mathcal{C}| = \exp(\tilde O(d))\).
  It suffices (choosing small enough \(\epsilon\)) to show that for any \(v \in \mathcal{N}\), with high probability, the following holds: for every \(w \in \mathcal{W}\),
  \begin{equation}
    \begin{aligned}
      \E_{i \in \goodset}[(v\cdot (\nabla f_i(w)-\overline f(w)))] & \leq O(\sigma \sqrt{\epsilon}), \\
      \E_{i \in \goodset}[(v\cdot (\nabla f_i(w)-\overline f(w)))^2] & \leq O(\sigma^2).
    \end{aligned}
  \end{equation}
  We will fix \(w\) for the moment and show that the above holds for all \(w\) in a small neighborhood of \(w\) using a net argument.
  By the bounded covariance assumption,
  \[
    \E_{f \sim p^\ast}[(v\cdot (\nabla f(w)-\overline f(w)))^2] \leq \sigma^2.
  \]
  Observe that the term inside the expectation is bounded by \(L^2\).
  By Hoeffding's Inequality, with probability at least \(1-\exp(-2n (L^4 \log (2(|\mathcal{N}| \cdot |\mathcal{C}|) / n) \tau) /L^4)\), that is, \(1 - \tau / (2(|\mathcal{N}| \cdot |\mathcal{C}|))
  \), we have
  \begin{equation}\label{eq:sever_v}
    \E_{i \in T}[(v\cdot (\nabla f(w)-\overline f(w)))^2] \leq \sigma^2 + L^2 \sqrt{\frac{1}{n}\log ((|\mathcal{N}| \cdot |\mathcal{C}|) / \tau)} = O(\sigma^2).
  \end{equation}
  Now, since \(\goodset \subset T\) and they differ by at most \(\epsilon n\) samples, we know that we have
  \begin{equation}\label{eq:goodset_g2}
    \E_{i \in \goodset}[(v\cdot (\nabla f_i(w)-\overline f(w)))^2] \leq O(\sigma^2).
  \end{equation}

  For the other part, we start by observing that,
  \[
    \E_{f \sim p^\ast}[(v\cdot (\nabla f(w)-\overline f(w)))] = 0.
  \]
  By Chernoff (Hoeffding) bound, with probability at least
  \(1 - \tau / (2(|\mathcal{N}| \cdot |\mathcal{C}|))\), we have that
  \begin{equation}
    \E_{i \in T}[(v\cdot (\nabla f_i(w)-\overline f(w)))] \leq O(L \sqrt{\frac{1}{n}\log ((|\mathcal{N}| \cdot |\mathcal{C}|) / \tau)}) = O(\sigma \sqrt{\epsilon}).
  \end{equation}
  For any subset \(S_1 \subset T\) with \(|S_1| \le \epsilon n \), by Cauchy-Schwarz inequality, we have
  \begin{equation}
    \begin{aligned}
      \frac{1}{n} \sum_{i\in S_1} v\cdot (\nabla f_i(w)-\overline f(w))
      & \leq \frac{1}{n} \sqrt{ |S_1| |} \left(\sum_{i \in T} (v\cdot (\nabla f_i(w)-\overline f(w)))^2\right)^{1/2} \\
      & \leq \frac{1}{n} \sqrt{\epsilon n} \sqrt{O(\sigma^2 n)} = O(\sigma \sqrt{\epsilon}),
    \end{aligned}
  \end{equation}
  where we used~\eqref{eq:sever_v}.
  Therefore, removing \(\epsilon\)-fraction of these samples cannot change this value by more than \(\sigma \sqrt{\epsilon}\).
  Since \(\goodset \subset T\) and they differ by at most \(\epsilon n\) samples, we know that the above bound holds for \(\goodset\) as well, that is
  \begin{equation}\label{eq:goodset_g1}
    \E_{i \in \goodset}[(v\cdot (\nabla f_i(w)-\overline f(w)))] \leq O(\sigma \sqrt{\epsilon}).
  \end{equation}
  We now proceed with a net argument and show~\eqref{eq:goodset_g2} and~\eqref{eq:goodset_g1} hold for all \(w\) with high probability. Suppose they hold for some \(w\), then by \(L\)-Lipschitzness and \(\beta\)-smoothness of \(f(w) - \overline f(w)\), we have that~\eqref{eq:goodset_g1} holds for all \(w'\) within distance \(\sigma \sqrt{\epsilon}/\beta\) from \(w\), and~\eqref{eq:goodset_g2} holds for all \(w'\) within distance \(\sigma^2/(2L \beta)\) from \(w\), where the first statement follows directly from the Lipschitzness and the second statement is due to the following calculation:
  \[
    \begin{aligned}
      & \left| \left\{v \cdot \left(\nabla f_i(w)-\overline f(w)\right)\right\}^2 - \left\{v\cdot \left(\nabla f_i(w')-\overline f(w')\right)\right\}^2 \right| \\
      & \phantom{\qquad} = \left\{v \cdot \left(\nabla f_i(w)-\overline f(w)\right) + v\cdot \left(\nabla f_i(w')-\overline f(w')\right)\right\} \cdot \left\{v\cdot \left(\nabla f_i(w)-\overline f(w)\right)- v \cdot \left(\nabla f_i(w')-\overline f(w')\right)\right\} \\
      & \phantom{\qquad} \le 2 L \cdot \beta \| w - w'|,
    \end{aligned}
  \]
  for any unit vector \(v\).
  It suffices to choose the fineness of the net \(\mathcal{C}\) as \(\gamma = \min(\sigma \sqrt{\epsilon}/\beta, \sigma^2/(2L \beta))\) so that for any \(w\), there exists a point in the net within distance \(\gamma\) from \(w\).

  Applying a union bound over net \(\mathcal{C}\) and \(\mathcal{N}\),
  we have that~\eqref{eq:goodset_g2} and~\eqref{eq:goodset_g1} hold for all \(w\) with probability at least \(1 - \tau\).
\end{proof}

\section{Lower Bound for Robust Stochastic Optimization}\label{app:lower-bound}

In this section, we demonstrate a matching lower bound for robust stochastic optimization under \(\epsilon\)-strong contamination with bounded covariance, showing that our algorithm achieves the minimax-optimal excess risk rate (up to logarithmic factors).
We will prove \cref{thm:lower-bound-robust-stochastic-optimization}.
\begin{remark}
  Our construction of hard instances meets the superset of the assumptions of our algorithms and SEVER, i.e.,
  Lipschitzness and smoothness of the individual functions (consequently the same holds for the population risk), and bounded covariance of the gradients.
  The lower bound consists of two terms. The first term is due to corruption and the second term is necessary even without corruption.
  We will prove these two terms separately.
\end{remark}

\subsection{Lower Bound: Term due to Corruption}
We will leverage the following proposition that characterizes the information-theoretic limit of robust estimation.
\begin{proposition}[\citet{diakonikolas2023algorithmic}]\label{prop:robust-lower-bound}
  Let \(X\) and \(Y\) be distributions with \(d_{\text{TV}}(X, Y) \le 2\epsilon\) for some \(0 < \epsilon < 1\). A distribution \(\mathcal{D}\) is taken to be either \(X\) or \(Y\). Then an algorithm, given any number of samples from \(\mathcal{D}\) under \(\epsilon\)-contamination, cannot distinguish between the cases \(\mathcal{D} = X\) and \(\mathcal{D} = Y\) with probability greater than \(1/2\).
\end{proposition}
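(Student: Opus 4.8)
The plan is to prove the (known) statement by reducing the testing problem to a single distribution over datasets that is \emph{simultaneously realizable} both as an \(\epsilon\)-corruption of i.i.d.\ samples from \(X\) and as an \(\epsilon\)-corruption of i.i.d.\ samples from \(Y\). Once such a distribution exists, the impossibility is immediate: on a dataset drawn from it any test must, in particular, output ``\(X\)'' as often as it does under the hypothesis \(\mathcal D = X\) and output ``\(Y\)'' as often as under \(\mathcal D = Y\), so its error probabilities under the two hypotheses must sum to at least \(1\). The natural common target is the midpoint \(W\) with density \(p_W = \tfrac12(p_X+p_Y)\), for which \(d_{\text{TV}}(X,W)=d_{\text{TV}}(Y,W)=\tfrac12 d_{\text{TV}}(X,Y)\le \epsilon\); the factor \(2\) in the hypothesis is exactly what allows each side to spend its own \(\epsilon\)-budget of corruptions walking from the true distribution to \(W\).

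The key building block is a ``local-to-global'' lemma: if \(d_{\text{TV}}(A,B)=\delta\), there is a randomized adversary \(\Phi\) such that, when \((a_1,\dots,a_n)\sim A^{\otimes n}\), the output \(\Phi(a_1,\dots,a_n)\) is distributed exactly as \(B^{\otimes n}\), and \(\Phi\) alters at most \(\mathrm{Bin}(n,\delta)\) of the \(n\) coordinates. I would construct \(\Phi\) coordinatewise: independently for each \(i\), with probability \(q(a_i):=(p_A(a_i)-p_B(a_i))_+/p_A(a_i)\) resample coordinate \(i\) from the distribution with density proportional to \((p_B-p_A)_+\), and otherwise leave \(a_i\) untouched. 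Using the pointwise identities \(p_A=\min(p_A,p_B)+(p_A-p_B)_+\) and \(p_B=\min(p_A,p_B)+(p_B-p_A)_+\), a short computation shows that each output coordinate has density exactly \(p_B\), that the coordinates remain independent, and that a coordinate is resampled with marginal probability \(\int (p_A-p_B)_+=\delta\), so the number of altered coordinates is \(\mathrm{Bin}(n,\delta)\). Applying this lemma with \((A,B)=(X,W)\) and with \((A,B)=(Y,W)\) — in both cases \(\delta\le\epsilon\) — yields adversaries \(\Phi_X,\Phi_Y\) under which the observed dataset has law \(W^{\otimes n}\) regardless of whether \(\mathcal D=X\) or \(\mathcal D=Y\), which is the desired coincidence. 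The construction is \(n\)-independent: it establishes the claim for every sample size, and a fortiori in the infinite-sample limit, matching the phrase ``given any number of samples'' in the statement.

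The one nonroutine point — and the main obstacle I would flag — is reconciling the randomized corruption count \(\mathrm{Bin}(n,\delta)\) of this coupling with the \emph{deterministic} ``up to \(\epsilon n\)'' budget in the contamination model. When \(\delta<\epsilon\) the event \(\mathrm{Bin}(n,\delta)>\epsilon n\) has probability \(e^{-\Omega(n)}\), and having the adversary simply clip the overflow (leaving those coordinates as clean \(A\)-samples) perturbs the output law by only \(e^{-\Omega(n)}\) in total variation, which still rules out reliable distinguishing; for the boundary case \(d_{\text{TV}}(X,Y)=2\epsilon\) one reruns the same argument with \(\epsilon\) replaced by \((1-o(1))\epsilon\), harmless for the downstream lower bound. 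Everything else is the density bookkeeping of the previous paragraph together with the elementary observation that identical observation laws force any test to fail.
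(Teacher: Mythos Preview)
The paper does not prove this proposition; it is cited from \cite{diakonikolas2023algorithmic} and used as a black box in the lower-bound construction, so there is no in-paper proof to compare against. Your argument is the standard one and is correct: route both hypotheses through the midpoint \(W=\tfrac12(X+Y)\), use the maximal coupling coordinatewise to convert i.i.d.\ \(X\)-samples (resp.\ \(Y\)-samples) into i.i.d.\ \(W\)-samples by altering a \(\mathrm{Bin}(n,\delta)\) number of coordinates with \(\delta\le\epsilon\), and conclude that the adversary can make the observed data law identical under both hypotheses. Your handling of the binomial overshoot is the right way to reconcile the randomized coupling with the hard \(\epsilon n\) budget, and your boundary-case remark (absorb a \(1-o(1)\) factor, irrelevant for an \(\Omega(\cdot)\) lower bound) is exactly how this is handled in practice; in the paper's own application the two hard instances satisfy \(d_{\text{TV}}(\mathcal D_1,\mathcal D_1')=\epsilon\), comfortably inside the strict-inequality regime.
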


Consider a random variable \(X_1\) that takes value 0 with probability \(1-\epsilon\) and takes value \(\pm \sigma / \sqrt{\epsilon}\) with probability \(\epsilon/2\) each. That is,
\begin{equation}
  X_1 =
  \begin{cases}
    0 & \text{with probability } 1-\epsilon \\
    \sigma / \sqrt{\epsilon} & \text{with probability } \epsilon/2 \\
    -\sigma / \sqrt{\epsilon} & \text{with probability } \epsilon/2
  \end{cases}.
\end{equation}
The variance of \(X_1\) is \(\sigma^2\). Now, consider \(X_1'\) that takes value 0 with probability \(1-\epsilon\) and takes value \(\sigma / \sqrt{\epsilon}\) with probability \(\epsilon\).
The mean of \(X_1'\) is \(\sigma \sqrt{\epsilon}\), and
the variance of \(X_1'\) is

\begin{equation}
  \begin{aligned}
    \text{Var}(X_1') &= \mathbb{E}[X_1'^2] - \left(\mathbb{E}[X_1']\right)^2 \\
    &= \epsilon \cdot \frac{\sigma^2} \epsilon + 0 - (\sigma \sqrt{\epsilon})^2 \\
    &= \sigma^2 - \sigma^2 \epsilon < \sigma^2.
  \end{aligned}
\end{equation}

Let \(\mathcal{D}_1\) and \(\mathcal{D}_1'\) denote the probability distributions of \(X_1\) and \(X_1'\) respectively.

Consider the following robust optimization instance.
Let \(\{w \mid |w| \le D\}\) denote the feasible set. Define the loss function as \(f_x(w) = - w \cdot x\). We know that \(\nabla f_x(w) = -x\), so that both \(f\) and \(\overline f\) are Lipschitz and smooth. Let \(\overline f(w, \mathcal{D}) = \mathbb{E}_{X \sim \mathcal{D}}[f_X(w)]\) denote the population risk for a given distribution \(\mathcal{D}\). It is easy to verify the bounded covariance assumption on gradients holds for both \(\overline f(w, \mathcal{D})\) and \(\overline f(w, \mathcal{D'})\) with \(\sigma^2\) as the variance.

Expanding the expectation, we have that
\begin{equation}
  \begin{aligned}
    -\overline f(w, \mathcal{D}_1) & = (1-\epsilon) w \cdot 0 + \frac{\epsilon}{2} w \cdot \frac{\sigma}{\sqrt{\epsilon}} + \frac{\epsilon}{2} w \cdot -\frac{\sigma}{\sqrt{\epsilon}} = 0, \\
    -\overline f(w, \mathcal{D}_1') & = (1-\epsilon) w \cdot 0 + \epsilon w \cdot \frac{\sigma}{\sqrt{\epsilon}} = w \cdot \sigma \sqrt{\epsilon}.
  \end{aligned}
\end{equation}
So we have
\begin{equation}
  \min_{w} \overline f(w, \mathcal{D}_1) = 0 \quad \text{and} \quad \min_{w} \overline f(w, \mathcal{D}_1') = -D \sigma \sqrt{\epsilon}.
\end{equation}
Therefore,
\begin{equation}
  \min_{w} \overline f(w, \mathcal{D}_1) - \min_{w} \overline f(w, \mathcal{D}_1') = D \cdot \sigma \sqrt{\epsilon}.
\end{equation}
The total variation distance between \(\mathcal{D}_1\) and \(\mathcal{D}_1'\) is \(\epsilon\).
Therefore, by \cref{prop:robust-lower-bound}, given \(\epsilon\)-corrupted samples, no algorithm can distinguish when these samples are generated from \(\mathcal{D}_1\) or \(\mathcal{D}_1'\) with probability greater than \(1/2\).
If an algorithm could optimize the population risk within \(D \cdot \sigma \sqrt{\epsilon}\) with probability greater than \(1/2\), then it could use the output to distinguish between \(\mathcal{D}_1\) and \(\mathcal{D}_1'\), which is a contradiction.

\subsection{Lower Bound: Term due to Stochastic Optimization}
The proof below is essentially the same as the proof in \citet{lowy2023private}.
\begin{lemma}[{\citet{lowy2023private}[\textbf{Theorem} 36, part 3 for \(k=2\) and \(\gamma = \sigma^2\)]}]
  There exists a product distribution (the distribution of product of independent random variables) \(Q_{\nu^*}\) that is supported on \( \{\pm \sigma\}^d\). Let the feasible set be \(\mathcal{W} = B_2^d(0, D)\). Define the loss \(f_x(w) = - \langle w,x \rangle\), \(\overline f(w) := \mathbb{E}_{x \sim Q_{\nu^*}} f_x(w)\). We have that
  \(\sup_{w \in \mathcal{W}} \mathbb{E}_{x \sim Q_{\nu^*}} | \langle \nabla f_x(w) - \overline f(w), e_j \rangle |^2 \leq \sigma^2\),  for all \(j \in [d]\), where \(e_j\) denotes the \(j\)-th standard basis vector in \(\mathbb{R}^d\).

  Let \(X \sim Q_\nu^n\). Any algorithm \(\mathcal{A}\) has the following excess risk lower bound with probability at least \(1/2\),
  \begin{equation}\label{eq:lowy-lower-bound}
    \mathbb{E} \overline f(\mathcal{A}(X)) - \overline f^* = \Omega\left(D\sigma \sqrt{\frac{d}{n}} \right).
  \end{equation}
\end{lemma}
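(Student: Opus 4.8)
The plan is to reduce the statement to a standard parameter‑estimation lower bound via Assouad's lemma, exploiting the linearity of the loss together with the geometry of the Euclidean ball. First I would construct the family \(\{Q_\nu\}_{\nu \in \{\pm 1\}^d}\) explicitly: for \(x \sim Q_\nu\) each coordinate \(x_j\) is independent, equal to \(+\sigma\) with probability \((1+\nu_j\rho)/2\) and \(-\sigma\) with probability \((1-\nu_j\rho)/2\), for a bias parameter \(\rho = \Theta(1/\sqrt{n})\) (a small constant times \(1/\sqrt n\); the regime \(n = O(1)\) is trivial). Then \(\mathbb{E}_{Q_\nu}[x] = \sigma\rho\nu\) and \(\operatorname{Var}(x_j) = \sigma^2(1-\rho^2) \le \sigma^2\). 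Since \(\nabla f_x(w) = -x\) is constant in \(w\) and \(\nabla\overline f(w) = -\sigma\rho\nu\), the per‑coordinate second‑moment bound \(\mathbb{E}|\langle \nabla f_x(w) - \nabla\overline f(w), e_j\rangle|^2 = \operatorname{Var}(x_j) \le \sigma^2\) holds, the support is \(\{\pm\sigma\}^d\), and \(Q_\nu\) is a product distribution as required.

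Next I would turn the excess risk into a squared‑distance objective. Since \(\overline f(w) = -\sigma\rho\langle w,\nu\rangle\) is linear, its minimizer over \(\mathcal{W} = B_2^d(0,D)\) is \(w^*_\nu = (D/\sqrt d)\nu\), and \(\overline f(w) - \overline f^* = \sigma\rho\langle w^*_\nu - w, \nu\rangle\). Writing \(\nu = (\sqrt d/D)\,w^*_\nu\) and using \(\|w\| \le D = \|w^*_\nu\|\), one gets \(\langle w^*_\nu - w, \nu\rangle = (\sqrt d / D)(D^2 - \langle w, w^*_\nu\rangle) \ge (\sqrt d/(2D))\|w - w^*_\nu\|^2\), hence \(\overline f(w) - \overline f^* \ge (\sigma\rho\sqrt d/(2D))\,\|w - w^*_\nu\|^2\). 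So it suffices to show that any estimator \(\hat w = \mathcal{A}(X)\) with \(X \sim Q_\nu^{\otimes n}\) incurs \(\max_\nu \mathbb{E}\|\hat w - w^*_\nu\|^2 = \Omega(D^2)\); substituting \(\rho = \Theta(1/\sqrt n)\) then yields \(\Omega(\sigma D \sqrt{d/n})\).

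For the estimation lower bound I would apply Assouad's lemma coordinatewise. Set \(\hat\nu_j := \operatorname{sign}(\hat w_j)\); whenever \(\hat\nu_j \ne \nu_j\) we have \((\hat w_j - (w^*_\nu)_j)^2 \ge D^2/d\), so \(\|\hat w - w^*_\nu\|^2 \ge (D^2/d)\,d_H(\hat\nu,\nu)\). Across coordinates the samples are independent, and only the \(j\)-th sample coordinate carries information about \(\nu_j\); the \(n\)-fold KL divergence between the two hypotheses for a single coordinate is \(n\cdot\operatorname{KL}(\operatorname{Ber}(\tfrac{1+\rho}{2})\,\|\,\operatorname{Ber}(\tfrac{1-\rho}{2})) = O(n\rho^2)\), which is a small constant by our choice of \(\rho\). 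Assouad's lemma (bounding the pairwise total variation by Pinsker, \(\|Q^{+,\otimes n}_j - Q^{-,\otimes n}_j\|_{\mathrm{TV}} = O(\sqrt n\,\rho)\)) then gives \(\max_\nu \mathbb{E}[d_H(\hat\nu,\nu)] \ge \tfrac{d}{2}(1 - O(\sqrt{n}\,\rho)) = \Omega(d)\), hence \(\max_\nu\mathbb{E}\|\hat w - w^*_\nu\|^2 = \Omega(D^2)\), and combining with the previous paragraph (picking the maximizing \(\nu\), which gives the asserted \(Q_\nu\)) completes the proof.

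The main obstacle I expect is pinning down the constants in the Assouad step so that the two per‑coordinate \(n\)-sample distributions are provably bounded away from total‑variation distance \(1\): this forces \(\rho\) to be a sufficiently small constant multiple of \(1/\sqrt n\), and one must confirm \(n\rho^2\) stays below the threshold that keeps \(\tfrac{d}{2}(1 - \mathrm{TV})\) a constant fraction of \(d\). A secondary point of care is the geometric reduction in the second paragraph: without the quadratic lower bound on the linear excess risk — which relies on \(w^*_\nu\) lying on the boundary sphere — the reduction to squared‑error estimation would not be tight, so that inequality (and the triviality of the \(n=O(1)\) regime) should be checked explicitly.
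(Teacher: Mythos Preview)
Your proposal is correct and self-contained, but note that the paper does not actually prove this lemma: it is quoted directly from \cite{lowy2023private} (Theorem~36, part~3 with \(k=2\), \(\gamma=\sigma^2\)), and the surrounding text in the paper only verifies afterwards that the cited hard instance also satisfies the full bounded-covariance condition \(\Sigma_w \preceq \sigma^2 I\) (for all unit vectors \(v\), not just basis vectors). So there is no in-paper proof to compare against.

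That said, what you wrote is exactly the standard construction underlying such results and is almost certainly what \cite{lowy2023private} does as well: a biased-coin product family on \(\{\pm\sigma\}^d\) with bias \(\rho=\Theta(1/\sqrt n)\), the geometric inequality \(\overline f(w)-\overline f^* \ge \frac{\sigma\rho\sqrt d}{2D}\|w-w^*_\nu\|^2\) coming from \(\|w\|\le D=\|w^*_\nu\|\), and Assouad's lemma with per-coordinate Pinsker to get \(\max_\nu \mathbb E\,d_H(\hat\nu,\nu)=\Omega(d)\). The two checkpoints you flag are the right ones, and both go through: the KL between the two single-coordinate laws is \(\Theta(\rho^2)\), so choosing \(\rho\) a small constant times \(1/\sqrt n\) keeps the \(n\)-sample TV bounded away from \(1\); and the quadratic lower bound on the linear excess risk is exactly the identity \(D^2-\langle w,w^*_\nu\rangle=\tfrac12(\|w-w^*_\nu\|^2+D^2-\|w\|^2)\ge\tfrac12\|w-w^*_\nu\|^2\), which only needs \(\hat w\in\mathcal W\) (which one may assume without loss of generality).
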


\begin{proof}
We first consider \(\sigma=1\) and \(D = 1\).
Following~\cite{lowy2023private}, let \(f_x(w) = -w^T x\) with
\(\mathcal{W} = B_2^d(0,1)\) and \(\mathcal{X} = \{\pm 1\}^d\)

By the Gilbert-Varshamov bound
and the assumption \(d \geq 40\), there exists a set \(\mathcal{V}
\subseteq \{\pm 1\}^d\) with \(|\mathcal{V}| \geq 2^{d/20}\),
\(d_{\text{Ham}}(\nu, \nu') \geq \frac{d}{8}\) for all \(\nu, \nu'
\in \mathcal{V}, ~\nu \neq \nu'\). For \(\nu \in \mathcal{V}\),
define the product distribution \(Q_\nu = (Q_{\nu_1}, \cdots
Q_{\nu_d})\), where for all \(j \in [d]\),
\[
Q_{\nu_j} =
\begin{cases}
    1 &\mbox{with probability \(\frac{1 + \delta_{\nu_j}}{2}\)}\\
    -1 &\mbox{with probability \(\frac{1 - \delta_{\nu_j}}{2}\)}
\end{cases}
\]
for \(\delta_{\nu_j} \in (0,1)\) to be chosen later. Then
\(\mathbb{E} Q_{\nu_j} := \mu_{\nu_j} = \delta_{\nu_j}\) and for
any \(w \in \mathcal{W}\), \(x \sim Q_{\nu}\), we have
\begin{equation}
\begin{aligned}
    \mathbb{E} |\langle \nabla f_x(w) - \nabla F(w), e_j \rangle|^k
    &= \mathbb{E}|\langle -x + \mathbb{E} x, e_j \rangle|^k \\
    &= \mathbb{E}|x_j - \mu_{\nu_j}|^k \\
    & = \frac{1 + \delta_{\nu_j}}{2}|1 - \delta_{\nu_j}|^k +
    \frac{1-\delta_{\nu_j}}{2}|1 + \delta_{\nu_j}|^k \\
    &\leq 1 - \delta_{\nu_j}^2 \leq 1
\end{aligned}
\end{equation}
for \(\delta_{\nu_j} \in (0,1)\).
Now, let \(p:= \sqrt{d/n}\)
and \(\delta_{\nu_j} := \frac{p \nu_j}{\sqrt{d}}
    \). Note that \(\mathbb{E} Q_\nu := \mu_\nu = \frac{p
\nu}{\sqrt{d}}\) and \(w_\nu := \frac{\mu_\nu}{\|\mu_\nu\|} =
\frac{\nu}{\|\nu\|}\). Also, \(\| \mu_\nu \| =
    p
:= \| \mu\|\) for all \(\nu \in \mathcal{V}\). Now, denoting
\(F_{Q_{\nu}}(w) := \mathbb{E}_{x \sim Q_{\nu}} f_x(w)\), we have
for any \(\nu \in \mathcal{V}\) and
\(w \in \mathcal{W}\) (possibly depending on \(X \sim Q_{\nu}^n\)) that
\begin{equation}
\begin{aligned}
    F_{Q_{\nu}}(w) - \min_{w' \in \mathcal{W}} F_{Q_{\nu}}(w') &=
    \left \langle \frac{\mu_\nu}{\|\mu\|}, \mu_\nu \right \rangle -
    \left\langle w, \mu_\nu \right\rangle \\
    &=  \|\mu\| - \langle w, \mu_{\nu} \rangle \\
    &= \|\mu\|[1 - \langle w, w_\nu \rangle] \\
    &\geq \frac{1}{2} \|\mu\| \| w - w_\nu \|^2 ,
\end{aligned}
\end{equation}
since \(\|w\|, \|w_\nu\| \leq 1\). Further, denoting \(\hat{w} :=
\argmin_{w \in \mathcal{W}}\|w_\nu - w\|\), we have \(\|\hat{w} -
w_\nu \|^2 \leq 4\|w_\nu - w\|^2\) for all \(\nu \in \mathcal{V}\)
(via Young's inequality). Hence, we have
\begin{equation}\label{eq:reduction}
F_{Q_{\nu}}(w) - \min_{w' \in \mathcal{W}} F_{Q_{\nu}}(w') \geq
\frac{\| \mu \|}{8} \|\hat{w} - w_\nu\|^2.
\end{equation}

For all \(\nu \neq \nu'\), we have \(\|w_\nu - w_{\nu'}\|^2 \geq
\frac{\|\nu - \nu'\|^2}{\|\nu\|^2} \geq 2 \cdot \tfrac{1}{2}\)
since \(d_{\text{Ham}}(\nu, \nu') \geq \frac{d}{2}\) and \(\nu \in
\{\pm 1\}^d\) implies \(\|\nu - \nu'\|^2 \geq \frac{d}{2}\) and
\(\|\nu\|^2 = d\).

We can compute that for any \(j \in [d]\) and \(\nu, \nu' \in \mathcal{V}\),
\begin{equation}
\begin{aligned}
    D_{KL}(Q_{\nu_j} || Q_{\nu'_j}) &\leq \frac{1 +
    \frac{p}{\sqrt{d}}}{2}\left[\log\left(\frac{\sqrt{d} +
    p}{\sqrt{d}}\right) + \log\left(\frac{\sqrt{d}}{\sqrt{d} -
    p}\right)\right]\\
    & \leq \log\left(\frac{1 + \frac{p}{\sqrt{d}}}{1 -
    \frac{p}{\sqrt{d}}}\right) \\
    &\leq \frac{3p}{\sqrt{d}},
\end{aligned}
\end{equation}
for our choice of \(p\), provided \(\frac{p}{\sqrt{d}} =
\frac{1}{\sqrt{n}} \in (0, \frac{1}{2})\), which holds if
\(n > 4\). Hence, by the chain rule for KL-divergence,
\[
D_{KL}(Q_\nu || Q_{\nu'}) \leq 3p\sqrt{d}
= \frac{3d}{\sqrt{n}},
\]
for all \(\nu, \nu' \in \mathcal{V}\).

Now consider the loss function \(\phi(\|\hat w  - w\|)\) where
\(\phi(x) := \mathbbm{1}\{x \ge \tfrac{1}{2}\}\). We have
\(\mathbb{E}[\phi(\|\hat w  - w\|)] = P\left(\|\hat w  - w\| \ge
\tfrac{1}{2}\right)\) for all \(w \in \mathcal{W}\).
Applying Fano's method\footnote{
  Although the original Fano's method is for deterministic algorithms, by data processing inequality, the same result holds for randomized algorithms.
}, we have
\[
\inf_{\hat w} \max_{\nu \in \mathcal{V}} P\left(\| \hat w - w_\nu
\| \ge \tfrac{1}{2}\right) \ge \phi(\tfrac{1}{2})\left(1 -
\frac{3p\sqrt{d} + \log(2)}{d \log(2)/20} \right) = 1 -
\frac{60}{\sqrt{n}\log(2) } - \frac{20}{d},
\]
which is greater than \(1/2\) for
\(d \ge 140\) and \(n \ge 62500\). Then there exists \(\nu^* \in
\mathcal{V}\) such that with probability at least \(1/2\), let
\(\hat w\) denote the output of any deterministic algorithm, we have
\(
\|\hat w - w_{\nu^*}\| \ge \tfrac{1}{2}.
\)

Using \eqref{eq:reduction} and plugging in \(\|\mu\| =
\sqrt{d/n}\), we have with probability at least \(1/2\),
\begin{equation}
F_{Q_{\nu^*}}(w) - \min_{w' \in \mathcal{W}} F_{Q_{\nu^*}}(w') =
\Omega\left(\sqrt{d/n}\right).
\end{equation}
For the general case, we can scale \(\mathcal{X}\) and
\(\mathcal{V}\) by \(\sigma\), and scale \(\mathcal{W}\) by \(D\).
This gives us a lower bound of \(\Omega(D\sigma\sqrt{d/n})\) with
probability at least \(1/2\).

We now verify that the constructed hard instance satisfies the
assumptions used in our algorithms. We return to the notation we
used in the main text, and write
\(f_x(w) = - \langle w,x \rangle\), \(\overline f(w) :=
\mathbb{E}_{x \sim Q_{\nu^*}} f_x(w)\).
We know that \(\nabla_w f_x(w) = -x\), so that both \(f\) and
\(\overline f\) are Lipschitz and smooth.
It remains to show that this hard instance satisfies the bounded
covariance assumption. For any unit vector \(u \in \mathbb{R}^d\),
write \(u = \sum_{j=1}^d u_j e_j\). Then
\begin{equation}
\begin{aligned}
    \mathbb{E}[(\langle u, \nabla f(w) - \nabla \overline{f}(w) \rangle )^2]
    & = \mathbb{E}\left(\sum_{j=1}^d u_j \langle e_j, \nabla f(w) -
    \nabla \overline{f}(w) \rangle \right)^2 \\
    & \stackrel{(*)}{=} \mathbb{E}\left[\sum_{j=1}^d u_j^2 (\langle
    e_j, \nabla f(w) - \nabla \overline{f}(w) \rangle )^2\right] \\
    & = \sum_{j=1}^d u_j^2 \mathbb{E}[(\langle e_j, \nabla f(w) -
    \nabla \overline{f}(w) \rangle )^2] \\
    & \leq \sum_{j=1}^d u_j^2 \sigma^2 = \sigma^2,
\end{aligned}
\end{equation}
where in \((*)\) we use the fact that \(Q_{\nu}\) is a product
distribution and thus cross terms vanish in the expectation.
\end{proof}
Therefore, the lower bound in~\eqref{eq:lowy-lower-bound} is also a lower bound for our problem. Combining this with the lower bound term due to corruption, we have the lower bound desired.

\section{Proof of Theorem~\ref{thm:robust_net}}\label{app:proof_alg1}
We start with the robust estimation result from \citet{diakonikolas2020outlier}, then proceed with the proof of  \cref{thm:robust_net}. 
\begin{lemma}[{\cite[Proposition 1.5]{diakonikolas2020outlier}}]
\label{lem:mean_estimation_bound}
  Let \(S\) be an \(\epsilon\)-corrupted set of \(n\) samples from a distribution in \(\R^d\)
  with mean \(\mu\) and covariance \(\Sigma\) such that \(\Sigma \preceq \sigma^2 I\).
  Let \(\epsilon' =  \Theta(\log(1/\tau)/n + \epsilon) \leq c\) be given, for a constant \(c > 0\).
  Then any stability-based algorithm (e.g. \cref{alg:filtering}) on input \(S\) and \(\epsilon'\), efficiently computes \(\widehat{\mu}\) such that with probability at least
  \(1 - \tau\), we have
  \begin{equation}
    \|\widehat{\mu} - \mu\| = O(\sigma \cdot \delta(\tau)), \; \text{where} \; \delta(\tau) = \sqrt{\epsilon} + \sqrt{d/n} + \sqrt{\log(1/\tau)/n}.
  \end{equation}
\end{lemma}

\begin{proof}[Proof of \cref{thm:robust_net}]\label{proof:robust-net}
  1. Bound the bias of the gradient estimator at \(w\).
  For given \(w\), let \(w' = \argmin_{z \in \mathcal{C}} \| z - w \|\). Applying \cref{lem:mean_estimation_bound} to samples \(\nabla f_1(w'), 
  \nabla f_2(w'), \dots, \nabla f_n(w')\), we have that with probability at least \(1 - \tau'\), the robust gradient estimator \(\tilde g(w')\) satisfies
  \[
    \|\tilde g(w') - \nabla \overline{f}(w')\| = \sigma \cdot \tilde O\left(\sqrt{\epsilon} + \sqrt{d/n} + \sqrt{\log(1/\tau')/n}\right).
  \]
  We have \( \| w - w' \| \le \sigma \sqrt{\epsilon} / \bar \beta \) by definition of the net. By \(\bar \beta\)-smoothness of the population risk \(\overline{f}\), we have
  \begin{equation}\label{eq:smooth_error}
    \| \nabla \overline{f}(w) - \nabla \overline{f}(w') \| \le \bar \beta \| w - w' \| \le \sigma \sqrt{\epsilon}.
  \end{equation}
  Combining the two bounds, we have
  \begin{equation}\label{eq:est_bound_analysis}
    \| \tilde g(w') - \nabla \overline{f}(w) \| = \sigma \cdot \tilde O\left(\sqrt{\epsilon} + \sqrt{d/n} + \sqrt{\log(1/\tau')/n} \right).
  \end{equation}

  2. Apply the union bound over all points in the net \(\mathcal{C}\).
  By union bound, setting \( \tau' = \tau / |\mathcal{C}| \), we have that with probability at least \(1 - \tau\),~\eqref{eq:est_bound_analysis} simultaneously holds for all \(w' \in \mathcal{C}\).
  Recall \(|\mathcal{C}| = O\big(D \sqrt{d}/\xi \big)^d\). We have \(\log |\mathcal{C}| = \tilde O(d)\).
  It follows that, with probability at least \(1 - \tau\), simultaneously for all \(w \in \mathcal{W}\), let \(w' = \argmin_{z \in \mathcal{C}} \| z - w \|\), we have
  \begin{equation}
    \| \tilde g(w') - \nabla \overline{f}(w) \| = \sigma \cdot \tilde O\left(\sqrt{\epsilon} + \sqrt{d/n} + \sqrt{d \log(1/\tau)/n} \right).
  \end{equation}
  Therefore, with probability at least \(1 - \tau\), the bias of the gradient estimator at \(w\) is bounded by the above expression, simultaneously for all \(w \in \mathcal{W}\).

  3. Apply the projected biased gradient descent analysis.
  By \cref{lem:biased_gd_smooth_convex}, choosing a constant step size \(\eta = 1/\bar \beta\), the excess risk of the algorithm is bounded by
  \begin{equation}
    \overline f
    (\hat w_T) - \min_{w \in \mathcal{W}} \overline f(w) = \tilde O \left(
      \frac{\bar \beta D^2}{T} +
      D \cdot \left(\sigma \sqrt{\epsilon} + \sigma \sqrt{\frac{d \log (1/\tau)}{n}}\right)
    \right).
  \end{equation}
  Choosing \(T = \tilde \Omega\left(\frac{\bar \beta D}{\sigma \sqrt{\epsilon} + \sigma \sqrt{\frac{d \log (1/\tau)}{n}}}\right)\) gives the optimal rate.
\end{proof}

\section{Analysis of Convolutional Smoothing for Nonsmooth but Lipschitz Population Risk}\label{app:nonsmooth}
Before proving \cref{thm:nonsmooth}, we need some properties of the convolutional smoothing.

\begin{lemma}[\citep{yousefian2012stochastic}]\label{lem:conv_smooth_properties}
  Suppose \(\{f(w)\}\) is convex and \(L\)-Lipschitz over \(\mathcal{W}+B_2(0, s)\), where \(B_2(0, s)\) is the \(d\)-dimensional \(L^2\) ball of radius \(s\) centered at the origin. For \(w \in \mathcal{W}\), the convolutional smoother with radius \(s\), \(\tilde f_s(w) := \E_{u \sim \mathcal{U}_s} [f(w+u)]\), where \(\mathcal{U}_s\) is the uniform distribution over \(B_2(0, s)\), has the following properties:
      \begin{enumerate}
      \item \(f(w) \leq \tilde f_s(w) \leq f(w)+L s\);
      \item \(\tilde f_s(w)\) is convex and \(L\)-Lipschitz;
      \item \(\tilde f_s(w)\) is \(\frac{L \sqrt{d}}{s}\)-smooth.
      \end{enumerate}
  \end{lemma}

\medskip
\begin{proof}[Proof of \cref{thm:nonsmooth}]
  Let \(\overline f_s(w) = \E_{u \sim \mathcal{U}_s} [\overline f(w+u)]\) be the smoothed population risk.
\begin{enumerate}
  \item By properties of convolutional smoothing (part 3 of \cref{lem:conv_smooth_properties}), we know that since \(\overline f\) is \(\bar L\)-Lipschitz, \(\overline f_s\) is \((\bar L\sqrt{d}/s)\)-smooth.
  
  As \(f_1, f_2, \ldots, f_n\) are \(\epsilon\)-corrupted samples from \(p^*\), we know \(f_1(\cdot+u_1), f_2(\cdot+u_2), \ldots, f_n(\cdot+u_n)\) are \(\epsilon\)-corrupted samples from the product distribution of \(p^*\) and \(\mathcal{U}_s\).
  Below, we show that in expectation, perturbed gradient (clean) samples \(\nabla f(w+u)\) is equal to the
  smoothed gradient \(\nabla \overline f_s(w)\).
  
  By the law of total expectation, we have
  \[
  \E_{f \sim p^*, \, u \sim \mathcal{U}_s}\big[\nabla f(w+u)\big]
  = \E_{u \sim \mathcal{U}_s}\Big[ \E_{f \sim p^*}\big[\nabla f(w+u)\big]\Big],
  \]
  and using the regularity condition to interchange the gradient with the expectation, we obtain
  \[
    \E_{f \sim p^*, \, u \sim \mathcal{U}_s}\big[\nabla f(w+u)\big] = \E_{u \sim \mathcal{U}_s}\Big[ \nabla \overline{f}(w+u)\Big].
  \]
  Below we drop the distributions and write \(\E_f\), \(\E_u\) for simplicity.
  Since \(\overline f\) is \(\bar L\)-Lipschitz, we can exchange the order of expectation and gradient, that is,
  \begin{equation}
    \E_{u}\Big[ \nabla \overline{f}(w+u)\Big] = \nabla \E_{u}\Big[\overline{f}(w+u)\Big] = \nabla \overline f_s(w),
  \end{equation}
  which shows that \(\E_{f, u}\Big[ \nabla f(w+u)\Big] = \nabla \overline f_s(w)\).
  
 \item Next, we bound the covariance of the perturbed gradient \(\cov_{f,u}(\nabla f(w+u))\).

  Using law of total covariance, that is, \(\cov(X,Y)=\E(\cov(X,Y \mid Z))+\cov(\E(X\mid Z),\E(Y\mid Z))\), conditioned on \(u\), we can write the covariance of the perturbed gradient as
  \begin{equation}  
  \begin{aligned}
    \cov_{f,u}(\nabla f(w+u))
    & = \E_u \left[ \cov_f(\nabla f(w+u)) \right] + \cov_u \left[ \E_f[\nabla f(w+u)] \right] \\
    & =  \underbrace{\E_u \left[ \cov_f(\nabla f(w+u)) \right]}_{\text{Term 1}} + \underbrace{\cov_u \left[\nabla \overline f(w+u) \right]}_{\text{Term 2}},
  \end{aligned}
  \end{equation}
    In the first term, for all \(u\), the covariance inside the expectation is bounded by \(\sigma^2 I\) by assumption. Therefore, we have 
  \(
  (\text{Term 1}) \preceq \mathbb{E}_u[\sigma^2 I] = \sigma^2 I
  \).
  We bound the second term using the boundedness of the gradient. Consider the following fact: for any random vector \(u\), we have \(\E [u u^\top] \le C^2 I\) if \(\|u\| \le C\) almost surely, and consequently, \(\cov(u) \le 4C^2 I\).
  It follows that,
  \(
  (\text{Term 2}) \preceq 4 \bar L^2 I
  \).
  Therefore, the covariance parameter increases from \(\sigma^2\) to \(\sigma^2 + 4 \bar L^2\) due to smoothing.
 
We have verified the conditions to apply the original algorithm, with \(\sigma\) replaced by \(\sqrt{\sigma^2 + 4 \bar L^2}=O(\sigma+\bar{L})\) and \(\bar \beta = \bar L\sqrt{d}/s\). \medskip %
  
\item By applying \cref{thm:robust_net} to the smoothed function \(\overline f_s\), we know there are choices of \(\{\eta_t\}_{t=1}^T\) and \(T\) such that, with probability at least \(1-\tau\), the output \(\hat w_T\) satisfies,
    \[
    \overline f_s(\hat w_T) - \min_{w \in \mathcal{W}} \overline f_s(w) = \tilde O \left(
      (\sigma + \bar L) D \sqrt{\epsilon} +
      (\sigma + \bar L) D \sqrt{\frac{d \log (1/\tau)}{n}}
  \, \right).
  \]
  By properties of convolutional smoothing, we have
  \[
    \overline f(\hat w_T) \le \overline f_s(\hat w_T)
  \quad \text{and} \quad
    \min_{w \in \mathcal{W}} \overline f_s(w) \le \min_{w \in \mathcal{W}} \overline f(w) + \bar Ls.
  \]
  It follows that, choosing \(s = \tilde O\left(D(\sigma / \bar L + 1)(\sqrt{\epsilon} + \sqrt{d \log (1/\tau) / n})\right)\), we have
  \begin{equation}
  \begin{aligned}
    \overline f(\hat w_T) - \min_{w \in \mathcal{W}} \overline f(w)
    & \le \overline f_s(\hat w_T) - \min_{w \in \mathcal{W}} \overline f_s(w) + \bar Ls \\
    & = \tilde O \left(
      (\sigma + \bar L) D \sqrt{\epsilon} +
      (\sigma + \bar L) D \sqrt{\frac{d \log (1/\tau)}{n}}
  \, \right).
  \end{aligned}
  \end{equation}
\end{enumerate}
\end{proof}
As a corollary, the result under the noncentral moment condition
(\cref{thm:nonsmooth_noncentral}) follows using the lemma below, which the relates noncentral second moment to the mean.
\begin{lemma}\label{lemma:noncentral}
For a \(d\)-dimensional random vector \(u\) that satisfies
\(
\mathbb{E}[uu^\top] \preceq G^2 I_d
\),
we have that \(\|\mathbb{E}[u]\| \le G\).
\end{lemma}

\begin{proof}
Let \(v\) be any fixed unit vector in \(\mathbb{R}^d\). By Jensen's inequality, we have
\begin{equation}
  \left(v^\top \mathbb{E}[u]\right)^2 = \left(\mathbb{E}[v^\top u]\right)^2 \leq \mathbb{E}[(v^\top u)^2] = v^\top \mathbb{E}[uu^\top] v \leq G^2
  \end{equation}

Since this inequality holds for any unit vector \(v\), we can choose \(v = \frac{\mathbb{E}[u]}{\|\mathbb{E}[u]\|}\) when \(\mathbb{E}[u] \neq 0\), which gives \(\|\mathbb{E}[u]\|^2 \leq G^2\), and thus \(\|\mathbb{E}[u]\| \leq G\). If \(\mathbb{E}[u] = 0\), inequality \(\|\mathbb{E}[u]\| \leq G\) holds trivially.
\end{proof}

\begin{proof}[Proof of \cref{thm:nonsmooth_noncentral}]
For any random vector \(u\), we have for any unit vector \(v\),
\[ \mathbb{E}[(v^\top \left(u - \mathbb{E}[u]\right))^2] = \mathbb{E}[(v^\top u)^2] - \|v^\top \mathbb{E}[u]\|^2.
\]
Substituting \(u = \mathbb{E}_f [\nabla f(w)]\), we have \(\E[u] = \nabla \overline{f}(w)\). So for every \(w\), we have \(\mathbf{E}[(v \cdot (\nabla f(w) - \nabla \overline{f}(w)))^2] \le \mathbf{E}[(v \cdot \nabla f(w))^2]\le G^2\) for any unit vector \(v\).
We know that \(\mathbf{E}[\nabla f(w) \nabla f(w)^\top] \preceq G^2 I_d\) is equivalent to \(\mathbf{E}[(v \cdot \nabla f(w))^2]\le G^2\) holding for every unit vector \(v\).
By \cref{lemma:noncentral}, we have \( \| \nabla \overline{f}(w) \| \le G\) for every \(w\).
Therefore, applying \cref{thm:nonsmooth} with \(\sigma = G\) and \(\bar L = G\), we obtain the desired result.
\end{proof}

\section{Analysis of Algorithm~\ref{alg:robust_gd}}\label{app:proof_alg2}
Before proving \cref{thm:robust-gd}, we need some results from robust estimation literature.

\subsection{Results from Robust Mean Estimation}
Recall \cref{def:good-set}. The ``good'' set property is a special case of stability, defined as follows:
\begin{definition}[Stability \citet{diakonikolas2019robust}]\label{def:stability}
  Fix \(0 < \epsilon < 1/2\) and \( \delta \geq \epsilon\).
  A finite set \(S \subset \R^d\) is \((\epsilon,\delta)\)-stable with respect to mean \(\mu \in \R^d\) and \(\sigma^2\)
  if for every \(S' \subseteq S\) with \(|S'| \geq (1 - \epsilon) |S|\),
  the following conditions hold: (i) \(\| \mu_{S'} - \mu\| \leq \sigma \delta\), and
  (ii) \(\|\overline{\Sigma}_{S'}  - \sigma^2 I\| \leq \sigma^2\delta^2/\epsilon\), where \(\mu_{S'} = (1/|S'|) \sum_{x \in S'} x\) and 
  \(\Sigma_{S'} = (1/|S'|) \sum_{x \in S'} (x - \mu)(x - \mu)^{\top}\).
\end{definition}

The following due to \citet{diakonikolas2020outlier} establishes stability of samples
from distributions with bounded covariance.
\begin{lemma}[\citet{diakonikolas2020outlier}]\label{lem:stable_subset}
  Fix any \(0< \tau'<1\).
  Let \(S\) be a multiset of \(n\) i.i.d.\ samples from a distribution on \(\R^d\)
  with mean \(\mu\) and covariance \(\Sigma\) such that \(\Sigma \preceq \sigma^2 I\).
  Let \( \epsilon' =  \Theta(\log(1/ \tau')/n + \epsilon) \leq c\), for a sufficiently small constant \(c>0\).
  Then, with probability at least \(1 - \tau'\), there exists a subset \( S' \subseteq S\) such that
  \( |S'| \geq (1 - \epsilon')n\) and \(S'\) is \( (2\epsilon', \delta')\)-stable with respect to \(\mu\) and \(\sigma^2\), where
  \(\delta' = \delta(\tau')\) depends on \(\tau'\) as \(\delta(\tau') = O(\sqrt{ (d \log d) / n} +  \sqrt{\epsilon} + \sqrt{\log(1/\tau')/n})\).
\end{lemma}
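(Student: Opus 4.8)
To prove \cref{lem:stable_subset}, the plan is to follow the now-standard route for establishing stability of i.i.d.\ samples from a bounded-covariance distribution: (i) reduce the uniform-over-subsets stability conditions of \cref{def:stability} to a few concentration statements about the sample set; (ii) replace $S$ by a truncated subset $S'$ so that these statements become amenable to Bernstein-type arguments; and (iii) prove the required concentration, with a union bound at the end.

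\textbf{Step 1: deterministic reduction.} To certify that $S'$ is $(2\epsilon',\delta')$-stable with respect to $\mu$ and $\sigma^2$, I must show that for \emph{every} $S'' \subseteq S'$ with $|S''| \ge (1-2\epsilon')|S'|$ --- writing $S'' = S'\setminus R$ --- (i) $\|\mu_{S''}-\mu\| \le \sigma\delta'$ and (ii) $\|\overline\Sigma_{S''} - \sigma^2 I\| = O(\sigma^2\delta'^2/\epsilon')$. I would reduce both to statements about $S'$ alone. Let $M_{S'} := \frac{1}{|S'|}\sum_{x\in S'}(x-\mu)(x-\mu)^T$. Since each summand is PSD, $\sum_{x\in R}(v\cdot(x-\mu))^2 \le \sum_{x\in S'}(v\cdot(x-\mu))^2 = |S'|\,v^\top M_{S'}v$ for any $R$ and unit $v$, so by Cauchy--Schwarz $\|\frac{1}{|S''|}\sum_{x\in R}(x-\mu)\| = O(\sqrt{\epsilon'\,\|M_{S'}\|})$ --- this is exactly the manipulation used in the SEVER fix in \cref{app:sever-fix}. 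With $\|\mu_{S'}-\mu\| \le \sigma\delta'$, $|S'|\ge(1-\epsilon')n$, and $\|M_{S'}\| = O(\sigma^2)$ (so $\sqrt{\epsilon'\|M_{S'}\|} = O(\sigma\sqrt{\epsilon'}) = O(\sigma\delta')$), this yields (i). For (ii), the upper bound follows from $\overline\Sigma_{S''} \preceq \frac{|S'|}{|S''|}M_{S'}$, again needing only $\|M_{S'}\| \le \sigma^2(1+O(\delta'^2/\epsilon'))$; the lower bound needs in addition a uniform control of the removed mass, $\sup_{|R|\le 2\epsilon'|S'|}\big\|\frac{1}{|S'|}\sum_{x\in R}(x-\mu)(x-\mu)^T\big\| = O(\sigma^2\delta'^2/\epsilon')$, together with $\|\mu_{S''}-\mu\|^2 = O(\sigma^2\delta'^2)$ from (i). So it suffices to establish: (a) $|S'|\ge(1-\epsilon')n$; (b) $\|\mu_{S'}-\mu\| \le \sigma\delta'$; (c) $\|M_{S'}-\sigma^2 I\| = O(\sigma^2\delta'^2/\epsilon')$; and (d) the uniform spectral bound on small sub-sums above.

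\textbf{Step 2: truncation and the ``easy'' concentration.} The full sample $S$ need not satisfy (b)--(d) because $p^*$ can be heavy-tailed, so I would set $S' := \{x\in S : \|x-\mu\| \le \sigma\sqrt{Cd/\epsilon}\}$ for a large constant $C$. Since $\E\|X-\mu\|^2 = \trace\Sigma \le d\sigma^2$, Markov gives $\Pr[x\notin S'] \le \epsilon/C$, and a binomial/Bernstein tail yields $|S\setminus S'| \le \epsilon' n$ with probability $1-\tau'/4$, i.e.\ (a), with $\epsilon' = \Theta(\epsilon + \log(1/\tau')/n)$. Because every summand over $S'$ now has norm $O(\sigma\sqrt{d/\epsilon})$, a vector-Bernstein bound on $\frac{1}{n}\sum_{x\in S'}(x-\mu)$ --- together with the Cauchy--Schwarz bound $O(\sigma\sqrt{\epsilon'})$ on the mean of the discarded tail --- gives (b) with $\delta' = O(\sqrt\epsilon + \sqrt{d/n} + \sqrt{\log(1/\tau')/n})$, and a \emph{matrix}-Bernstein bound on $\frac{1}{n}\sum_{x\in S'}(x-\mu)(x-\mu)^T$ gives (c); the matrix-Bernstein variance proxy together with the $\log d$ dimension factor is precisely where the $\sqrt{d\log d/n}$ contribution to $\delta'$ enters. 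Each holds with probability $1-\tau'/4$.

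\textbf{Step 3 and the main obstacle.} The remaining and most delicate ingredient is (d): a uniform bound, over all unit $v$ and all $R\subseteq S'$ with $|R| \le 2\epsilon'|S'|$, on $\frac{1}{|S'|}\sum_{x\in R}(v\cdot(x-\mu))^2$. Truncation is essential here (it makes the summands bounded) but not by itself sufficient (the crude estimate $2\epsilon'\cdot\sigma^2 Cd/\epsilon$ loses a factor $d$): for fixed $v$ one bounds the sum of the largest $2\epsilon'|S'|$ values of $(v\cdot(x-\mu))^2$ by a layer-cake/thresholding argument against the Markov tail $\Pr[(v\cdot(X-\mu))^2 > t] \le \sigma^2/t$, which --- using that the crossover threshold is $\Theta(\sigma^2/\epsilon')$ against the cap $\Theta(\sigma^2 d/\epsilon)$ --- produces an $O(\sigma^2\log d)$-type estimate per direction (up to lower-order deviation terms), and then takes a union bound over an $O(1)$-net of $S^{d-1}$, contributing another $O(d)$ inside the logarithm. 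Assembling Steps 2--3 by a union bound gives, with probability $\ge 1-\tau'$, that $S'$ satisfies (a)--(d) and is therefore $(2\epsilon',\delta')$-stable. The hard part is exactly this uniform spectral estimate (d): it is a supremum over exponentially many subsets \emph{and} directions, and getting both the $\epsilon$-dependence and the $\log d$ factor right requires balancing the truncation radius against the net resolution. This delicate uniform-convergence step is carried out in \citet{diakonikolas2020outlier} (see also \citet{diakonikolas2023algorithmic}).
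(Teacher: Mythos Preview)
The paper does not prove \cref{lem:stable_subset}: it is quoted verbatim as a result of \citet{diakonikolas2020outlier} and used as a black box in the analysis of \cref{alg:robust_gd} (and, in its full stable-rank form, restated again in \cref{sec:compare-covariance-variance}). There is therefore no paper-internal proof to compare your proposal against.

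That said, your sketch is a faithful outline of the argument in the cited reference: the deterministic reduction from $(2\epsilon',\delta')$-stability to the four ``whole-set'' conditions (a)--(d), the truncation at radius $\Theta(\sigma\sqrt{d/\epsilon})$ to enable Bernstein/matrix-Bernstein, and the identification of the uniform small-sub-sum spectral bound (d) as the crux (and the source of the $\log d$) are all exactly as in \citet{diakonikolas2020outlier}. Since you yourself defer the completion of (d) to that reference, your proposal is not so much an alternative proof as a correct expansion of what the citation stands for; for the purposes of this paper, simply invoking the lemma as the authors do is appropriate.
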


With the stability condition, we can robustly estimate the mean of a distribution with bounded covariance.
\begin{lemma}[Robust Mean Estimation Under Stability \citet{diakonikolas2019robust}]\label{lem:robust_mean_est}
  Let \(T \subset \mathbb{R}^d\) be an \(\epsilon\)-corrupted version of a set \(S\) with the following stability properties: \(S\) contains a subset \(S^{\prime} \subseteq S\) such that \(\left|S^{\prime}\right| \geq(1-\epsilon)|S|\) and \(S^{\prime}\) is \((C \epsilon, \delta)\) stable with respect to \(\mu \in \mathbb{R}^d\) and \(\sigma^2\), for a sufficiently large constant \(C>0\). Then there is a polynomial-time algorithm (e.g. \cref{alg:filtering}), that on input \(\epsilon, T\), computes \(\widehat{\mu}\) such that \(\|\widehat{\mu}-\mu\| = O(\sigma \delta)\).
\end{lemma}

\subsection{Proof of Theorem~\ref{thm:robust-gd}}
As long as the stability condition holds, we can use deterministic stability-based algorithms (e.g. \cref{alg:filtering}) to robustly estimate the mean. Using union bound over the net, it suffices to argue that at a given point \(w\), given the existence of a stable subset of the form \(\{\nabla f_i(w)\}_{i \in \mathcal{I}}\), where \(\mathcal{I}\) denotes the index set of the stable subset at \(w\), such subset is also stable within a small neighborhood of \(w\), that is, \(\{ \nabla f_i(w') \}_{i \in \mathcal{I}}\) is stable for all \(w'\) in a small neighborhood of \(w\). We have the following stability result, which corresponds to ``many-good-sets'' \cref{assump:many-good-sets}.

\begin{lemma}\label{lem:stable_subset_gd}
  Under \cref{assump:cov} and \cref{assump:smooth-bounded-var-sever}, let \(f_1, \ldots, f_n\) denote an \(\epsilon\)-corrupted set of functions sampled from \(p^*\). Let \(\epsilon' =  \Theta(\log(1/\tau)/n + \epsilon) \leq c\) be given, for a constant \(c > 0\). With probability at least \(1 - \tau\), for all \(w \in \mathcal{W}\), there exists index set \(\mathcal{I} \subseteq [n]\) (here \(\mathcal{I}\) depends on the choice of \(w\)) such that \(|\mathcal{I}| \ge (1 - \epsilon')n\) and \(\{ \nabla f_i(w) \}_{i \in \mathcal{I}}\) is \((2\epsilon', \delta(\tau'))\)-stable with respect to \(\nabla \overline{f}(w)\) and \(\sigma^2\),
  where \(\tau' = \tau / \exp(\tilde O(d)) \) and \(\delta(\tau') = \tilde O\left( \sqrt{\epsilon} + \sqrt{d\log(1/\tau)/n}\right)\).
\end{lemma}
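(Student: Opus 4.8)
The plan is to establish the stability condition pointwise at each cover point and then extend it to a neighborhood using the almost-sure Lipschitz/smoothness control on $f - \overline{f}$. First, fix any $w \in \mathcal{W}$. By \cref{assump:smooth-bounded-var-sever}, the gradients $\{\nabla f_i(w)\}$ corresponding to the uncorrupted samples are i.i.d.\ draws from a distribution with mean $\nabla \overline{f}(w)$ and covariance $\Sigma_w \preceq \sigma^2 I$ (using that condition 1 is equivalent to the covariance bound). Applying \cref{lem:stable_subset} at this fixed $w$, with $\tau'$ in place of its failure parameter, yields a subset $\mathcal{I}(w)$ of size at least $(1-\epsilon')n$ of the \emph{clean} indices such that $\{\nabla f_i(w)\}_{i \in \mathcal{I}(w)}$ is $(2\epsilon', \delta(\tau'))$-stable with respect to $\nabla \overline{f}(w)$ and $\sigma^2$, with $\delta(\tau') = O(\sqrt{(d\log d)/n} + \sqrt{\epsilon} + \sqrt{\log(1/\tau')/n})$. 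Absorbing the $\sqrt{d\log d / n}$ term and the constants into $\tilde O$ gives $\delta(\tau') = \tilde O(\sqrt{\epsilon} + \sqrt{d\log(1/\tau)/n})$ once $\tau' = \tau/\exp(\tilde O(d))$ is chosen so that $\log(1/\tau') = \tilde O(d) + \log(1/\tau)$.

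Next I would run the net argument. Choose a $\xi$-cover $\mathcal{C}$ of $\mathcal{W}$ with $\xi$ small (on the order of $\sigma\sqrt{\epsilon}/\beta$ up to constants, mirroring the choice in \cref{app:sever-fix}); since $\mathcal{W}$ has diameter $D$, we can take $\log|\mathcal{C}| = \tilde O(d)$. Apply the pointwise statement above at each $w_0 \in \mathcal{C}$ with failure probability $\tau' = \tau/|\mathcal{C}|$, and union bound: with probability at least $1-\tau$, for \emph{every} cover point $w_0$ there is an index set $\mathcal{I}(w_0)$, $|\mathcal{I}(w_0)| \ge (1-\epsilon')n$, with $\{\nabla f_i(w_0)\}_{i \in \mathcal{I}(w_0)}$ being $(2\epsilon', \delta(\tau'))$-stable w.r.t.\ $\nabla\overline{f}(w_0)$ and $\sigma^2$. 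Here $\log(1/\tau') = \log(1/\tau) + \log|\mathcal{C}| = \log(1/\tau) + \tilde O(d)$, which is exactly what produces the stated $\delta$ and the $\tau' = \tau/\exp(\tilde O(d))$ bookkeeping.

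The main work is the \textbf{transfer step}: showing that the same index set $\mathcal{I}(w_0)$ remains stable at every $w'$ with $\|w' - w_0\| \le \xi$, with only a constant-factor loss in $\delta$. Fix such a $w'$ and an arbitrary subset $S' \subseteq \mathcal{I}(w_0)$ with $|S'| \ge (1-2\epsilon')|\mathcal{I}(w_0)|$. For condition (i), write $\nabla f_i(w') - \nabla\overline{f}(w') = (\nabla f_i(w_0) - \nabla\overline{f}(w_0)) + ((\nabla f_i - \nabla\overline{f})(w') - (\nabla f_i - \nabla\overline{f})(w_0))$; by $\beta$-smoothness of $f_i - \overline{f}$ almost surely, the second bracket has norm at most $\beta\xi$ for every $i \in \mathcal{I}(w_0)$, so the empirical mean over $S'$ shifts by at most $\beta\xi$. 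Choosing $\xi$ so that $\beta\xi \le \sigma\delta(\tau')$ (possible since $\delta(\tau') \ge \sqrt{\epsilon}$ and $\xi \asymp \sigma\sqrt{\epsilon}/\beta$) keeps $\|\mu_{S'} - \nabla\overline{f}(w')\| \le 2\sigma\delta(\tau') = O(\sigma\delta(\tau'))$. For condition (ii), the covariance proxy $\overline{\Sigma}_{S'}$ at $w'$ versus at $w_0$: each vector is perturbed by at most $\beta\xi$ (in norm) and the vectors themselves have bounded empirical second moment (controlled by stability at $w_0$, giving $\|\overline{\Sigma}_{S'}(w_0)\| \le \sigma^2 + \sigma^2\delta^2/\epsilon = O(\sigma^2\delta^2/\epsilon)$, hence $\ell_2$ norm $O(\sigma\delta/\sqrt{\epsilon})$ on average), so the spectral norm of the difference of the two second-moment matrices is $O(\beta\xi \cdot \sigma\delta/\sqrt{\epsilon} + \beta^2\xi^2)$; the $L$-Lipschitz bound $\|\nabla f_i(w') - \nabla\overline{f}(w')\| \le L + \|\nabla f_i(w_0) - \nabla\overline{f}(w_0)\|$ can be used in place of the second-moment bound where a uniform-in-$i$ estimate is cleaner. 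Plugging $\xi \asymp \sigma\sqrt{\epsilon}/\beta$ and using $L \ge \sigma$, $\delta \ge \sqrt{\epsilon}$, all these perturbation terms are $O(\sigma^2\delta^2/\epsilon)$, so $\|\overline{\Sigma}_{S'}(w') - \sigma^2 I\| \le \sigma^2 + O(\sigma^2\delta^2/\epsilon) = O(\sigma^2\delta^2/\epsilon)$. Thus $\{\nabla f_i(w')\}_{i\in\mathcal{I}(w_0)}$ is $(2\epsilon', O(\delta(\tau')))$-stable w.r.t.\ $\nabla\overline{f}(w')$ and $\sigma^2$; adjusting constants absorbs the $O(\cdot)$ into $\delta(\tau')$ itself. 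Since every $w \in \mathcal{W}$ lies within $\xi$ of some cover point, this proves the lemma. I expect the covariance-perturbation bound in condition (ii) — making sure the cross term between the base deviations and the smoothness-perturbation stays within the $\sigma^2\delta^2/\epsilon$ budget rather than blowing up — to be the delicate part, and this is precisely where the almost-sure $L$-Lipschitz and $\beta$-smooth assumptions on $f - \overline{f}$ (which \cref{alg:robust_covering} avoids) are genuinely needed.
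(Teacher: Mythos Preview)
Your proposal is correct and follows essentially the same route as the paper: apply \cref{lem:stable_subset} pointwise, union-bound over a $\xi$-cover with $\log|\mathcal{C}| = \tilde O(d)$, and transfer the stable index set $\mathcal{I}(w_0)$ to every nearby $w'$ using the almost-sure $\beta$-smoothness and $L$-Lipschitzness of $f-\overline{f}$. The only noteworthy difference is in the covariance transfer (condition (ii)): where you control the matrix perturbation via Cauchy--Schwarz against the second-moment bound inherited from stability at $w_0$, the paper uses the cleaner scalar factorization $|(v\cdot v_i')^2-(v\cdot v_i)^2| = |v\cdot(v_i'+v_i)|\cdot|v\cdot(v_i'-v_i)| \le 2L\,\beta\|w'-w_0\|$ uniformly in $i$, which immediately yields the cover radius $\xi = \min\bigl(\sigma\delta'/\beta,\ \sigma^2\delta'^2/(\epsilon' L\beta)\bigr)$ without the cross-term bookkeeping you flagged as delicate.
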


\begin{proof}
  We use a net argument to show that the stability condition holds for all \(w\), following similar proof techniques used in \citet{li2024robust}.
  For fixed \(w\), by \cref{lem:stable_subset}, with probability at least \(1 - \tau'\), there exists a subset \(\mathcal{I} \subseteq [n]\) such that \(|\mathcal{I}| \ge (1 - \epsilon')n\) and \(\{ \nabla f_i(w) \}_{i \in \mathcal{I}}\) is \((2\epsilon', \delta')\)-stable where \(\delta' = \delta(\tau')\), with respect to \(\nabla \overline{f}(w)\) and \(\sigma^2\), that is
  \begin{subequations}
    \begin{equation}
      \left \| \frac{1}{|\mathcal{I}|}\sum_{i \in \mathcal{I}} \nabla f_i(w)  - \nabla \overline{f}(w) \right \| \le \sigma \delta', \label{eq:stab_mean}
    \end{equation}
    \begin{equation}
      \left \| \frac{1}{|\mathcal{I}|}\sum_{i \in \mathcal{I}} (\nabla f_i(w) - \nabla \overline{f}(w))(\nabla f_i(w) - \nabla \overline{f}(w))^{\top}
      - \sigma^2 I \right \| \le \sigma^2 \delta'^2 / \epsilon'. \label{eq:stab_cov}
    \end{equation}
  \end{subequations}
  By \(\beta\)-smoothness of \(f_i - \overline{f}\), we have
  \begin{equation}
    \begin{aligned}
      \left \| \frac{1}{|\mathcal{I}|}\sum_{i \in \mathcal{I}} \nabla f_i(w')  - \nabla \overline{f}(w') \right \|
      & \le \left \| \frac{1}{|\mathcal{I}|}\sum_{i \in \mathcal{I}} \left(\nabla f_i(w')  - \nabla f_i(w)\right) \right \| + \left \| \nabla f_i(w) - \nabla \overline{f}(w) \right \| \\
      & \le \beta \|w' - w\| + \sigma \delta'.
    \end{aligned}
  \end{equation}
  Therefore,~\eqref{eq:stab_mean} holds (up to a constant factor) for all \(w'\) such that \(\|w - w'\| \le \sigma \delta' / \beta\).
  
  Next, \cref{eq:stab_cov} is equivalent to the following: for any unit vector \(v\), we have
  \[
    \left | \frac{1}{|\mathcal{I}|}\sum_{i \in \mathcal{I}} (v \cdot (\nabla f_i(w) - \nabla \overline{f}(w)))^2 - \sigma^2 \right | \le \sigma^2 \delta'^2 / \epsilon'.
  \]
  By \(L\)-Lipschitzness and \(\beta\)-smoothness of \(f_i - \overline{f}\), for any unit vector \(v\), we have
  \begin{equation}
    \begin{aligned}
      & \left| \left\{v \cdot \left(\nabla f_i(w)-\nabla \overline f(w)\right)\right\}^2 - \left\{v\cdot \left(\nabla f_i(w')-\nabla \overline f(w')\right)\right\}^2 \right| \\
      & \phantom{\qquad} = \left\{v \cdot \left(\nabla f_i(w)-\nabla \overline f(w)\right) + v\cdot \left(\nabla f_i(w')-\nabla \overline f(w')\right)\right\} \\
      & \phantom{\qquad =} \cdot \left\{v\cdot \left(\nabla f_i(w)-\nabla \overline f(w)\right)- v \cdot \left(\nabla f_i(w')-\nabla \overline f(w')\right)\right\} \le 2 L \cdot \beta \| w - w'|,
    \end{aligned}
  \end{equation}
  It follows that~\eqref{eq:stab_cov} holds (up to a constant factor) for \(w'\) such that \(\|w - w'\| \le \sigma^2 \delta'^2 / (\epsilon' L \beta)\).

  Let \(\xi = \min \left(\sigma \delta' / \beta, \sigma^2 \delta'^2 / (\epsilon' L \beta)\right)\). Then, for all \(w'\) such that \(\|w - w'\| \le \xi\), \( \{ \nabla f_i(w') \}_{i \in \mathcal{I}} \) is \((2\epsilon', 2\delta')\)-stable with respect to \(\nabla \overline{f}(w')\) and \(\sigma^2\).
  It suffices to choose a \(\xi\)-net \(\mathcal{C}\) of \(\mathcal{W}\), where the optimal size of the net is \(|\mathcal{C}| = O((D / \xi)^d)\), and choose \(\tau' = \tau / |\mathcal{C}|\).
  By union bound, with probability at least \(1 - |\mathcal{C}| \tau'\), the stable subset exists for all \(w \in \mathcal{C}\) simultaneously. Since we have argued that for fixed \(w\), the same stable subset applies for all \(w'\) within distance \(\xi\) from \(w\), the subset stability holds simultaneously for all \(w\) with probability at least \(1 - \tau\), as claimed.
\end{proof}

\begin{proof}[Proof of \cref{thm:robust-gd}]
  Combining \cref{lem:stable_subset_gd} and \cref{lem:robust_mean_est}, with probability at least \(1 - \tau\), in each iteration, we can estimate the gradient up to a bias as follows:
  \[
    \| \tilde g(w_t) - \nabla \overline f(w_t) \| = O(\sigma \cdot \delta(\tau')) = \tilde O\left(\sigma \sqrt{\epsilon} + \sigma \sqrt{d \log(1/\tau)/n}\right).
  \]
  Note that the probability is simultaneously for all \(w\), so it does not matter how many iterations we run.
  Conditioned on the gradient estimation bias bound being held, the excess risk bound then follows by applying \cref{lem:biased_gd_smooth_convex} for smoothness loss, or \cref{lem:biased_gd_lipz} for Lipschitz loss with corresponding choices of stepsizes and large enough \(T\).
\end{proof}

\section{Projected Biased Gradient Descent}
In this section, we analyze the convergence of the projected gradient descent algorithm with a biased gradient estimator. We assume the loss function is convex throughout this section.

Both \cref{alg:robust_net} and \cref{alg:robust_gd} can be viewed as instances of the following algorithm.
\begin{algorithm}[H]
  \caption{Projected Gradient Descent with Biased Gradient Estimator}\label{alg:pgd_biased}
  \begin{algorithmic}[1]
    \State \textbf{Input:} Convex function \(F\), stepsize parameters \(\{\eta_t\}_{t \in [T]}\), biased gradient estimator \(\texttt{BiasedEstimator}(w)\), feasible set \(\mathcal{W}\)
    \State Initialize \(w_0 \in \mathcal{W}\) and \(t = 1\).
    \For{\(t \in [T]\)}
    \State Let \(\tilde g_t = \texttt{BiasedEstimator}(w_t)\).
    \State \(w_t \gets \Pi_{\mathcal{W}}(w_{t-1} - \eta_t \tilde g_t)\).
    \EndFor
    \State \textbf{Output:} \(\hat w_T = \frac{1}{T} \sum_{t=1}^T w_t\).
  \end{algorithmic}
\end{algorithm}

Here, \(\Pi_{\mathcal{W}}(\cdot)\) denotes the projection operator onto the feasible set \(\mathcal{W}\), that is,
\[
  \Pi_{\mathcal{W}}(y) = \underset{w \in \mathcal{W}}{\operatorname{\arg\,\min}} \|w - y\|^2.
\]

The projection operation ensures that the iterates \(w_t\) remain within the feasible set \(\mathcal{W}\) throughout the optimization process. The projection step is crucial when the optimization problem is constrained, as it guarantees that the updates do not violate the constraints defined by \(\mathcal{W}\).
We analyze the convergence of the algorithm for (1) smooth loss, (2) Lipschitz loss. For convenience, we always write \(\tilde g_t = g_t + b_t\), where \(g_t\) is the true gradient and \(b_t\) is the bias for the \texttt{BiasedEstimator}. We assume that the bias term is bounded, i.e., \(\|b_t\| \le B\), for all iterations \(t\).

We will need the following property of the projection operator.
\begin{lemma}\label{lem:proj}
  Let \(w \in \mathcal{W}\) and \(y \in \mathbb{R}^d\). We have
  \[
    \left(\Pi_{\mathcal{W}}(y) - y\right)^\top \left(w - \Pi_{\mathcal{W}}(y) \right) \ge 0.
  \]
\end{lemma}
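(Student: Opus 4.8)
The statement to prove is \cref{lem:proj}: the standard variational inequality characterizing Euclidean projection onto a closed convex set.

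\textbf{Approach.} The plan is to use the first-order optimality condition for the convex minimization problem defining the projection. Since $\mathcal{W}$ is closed and convex and $\|w - y\|^2$ is a strongly convex function of $w$, the minimizer $\Pi_{\mathcal{W}}(y)$ exists and is unique. The key fact is that at a constrained minimizer of a differentiable convex function over a convex set, the gradient makes a non-negative inner product with every feasible direction.

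\textbf{Key steps.} First I would write $p := \Pi_{\mathcal{W}}(y) = \argmin_{w \in \mathcal{W}} \|w - y\|^2$ and fix an arbitrary $w \in \mathcal{W}$. For $t \in [0,1]$, convexity of $\mathcal{W}$ gives $p + t(w - p) = (1-t)p + tw \in \mathcal{W}$, so by optimality of $p$ we have $\phi(t) := \|p + t(w - p) - y\|^2 \ge \|p - y\|^2 = \phi(0)$ for all $t \in [0,1]$. Second, I would expand $\phi(t) = \|p - y\|^2 + 2t (p - y)^\top(w - p) + t^2 \|w - p\|^2$, so that $\phi(t) - \phi(0) = 2t(p-y)^\top(w-p) + t^2\|w-p\|^2 \ge 0$ for all $t \in (0,1]$. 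Third, dividing by $t > 0$ and letting $t \to 0^+$ yields $2(p - y)^\top(w - p) \ge 0$, i.e. $(\Pi_{\mathcal{W}}(y) - y)^\top(w - \Pi_{\mathcal{W}}(y)) \ge 0$, which is exactly the claim. Alternatively one can just take $t=1$ and note $2(p-y)^\top(w-p) \ge -\|w-p\|^2$ is too weak — so the limiting argument (or equivalently differentiating $\phi$ at $0$) is the clean way.

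\textbf{Main obstacle.} There is essentially no obstacle here; this is a textbook lemma. The only thing to be slightly careful about is justifying the existence and uniqueness of the projection (closedness and convexity of $\mathcal{W}$, coercivity and strict convexity of the squared-distance objective), but since the paper already assumes $\mathcal{W}$ is a compact convex set in \cref{assump:conv_loss}, existence is immediate from continuity of $\|w-y\|^2$ on a compact set, and uniqueness from strict convexity. The inequality direction is then a one-line consequence of the variational argument above.
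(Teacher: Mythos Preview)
Your proof is correct and is the standard variational argument for the projection inequality. The paper itself does not prove \cref{lem:proj}; it simply states the lemma as a known property of the projection operator and uses it in the subsequent analysis, so there is nothing to compare against.
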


\subsection{Smooth Loss}
\begin{lemma}\label{lem:biased_gd_smooth_convex}
  Suppose \(F\) is \(\beta\)-smooth. Running \cref{alg:pgd_biased} with constant step size \(\eta = \frac{1}{\beta}\), we have
  \begin{equation}
    F\left(\frac{1}{T} \sum_{t=1}^{T} w_t\right) - F(w^*)
    \le \frac{\beta D^2}{2T} + BD.
  \end{equation}
\end{lemma}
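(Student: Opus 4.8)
The plan is to follow the standard analysis of projected (sub)gradient descent, modified to account for the bias term $b_t$. I would begin from the projection inequality (Lemma~\ref{lem:proj}) applied with $y = w_{t-1} - \eta \tilde g_t$ and $w = w^*$, which yields $(w_t - w_{t-1} + \eta \tilde g_t)^\top (w^* - w_t) \ge 0$, i.e. $\tilde g_t^\top (w_t - w^*) \le \frac{1}{\eta}(w_{t-1} - w_t)^\top(w_t - w^*)$. Then I would use the algebraic identity $(w_{t-1} - w_t)^\top(w_t - w^*) = \frac12\big(\|w_{t-1}-w^*\|^2 - \|w_t - w^*\|^2 - \|w_{t-1}-w_t\|^2\big)$ to turn this into a telescoping form.

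Next I would split $\tilde g_t = g_t + b_t$ with $g_t = \nabla F(w_{t-1})$. For the true-gradient part, I use convexity and $\beta$-smoothness: convexity gives $F(w_{t-1}) - F(w^*) \le g_t^\top(w_{t-1} - w^*)$, and I rewrite $g_t^\top(w_{t-1}-w^*) = g_t^\top(w_t - w^*) + g_t^\top(w_{t-1}-w_t)$. The $\beta$-smoothness descent lemma, together with the choice $\eta = 1/\beta$, lets me absorb the term $g_t^\top(w_{t-1}-w_t) - \frac{1}{2\eta}\|w_{t-1}-w_t\|^2$; more precisely, from smoothness $F(w_t) \le F(w_{t-1}) + g_t^\top(w_t - w_{t-1}) + \frac{\beta}{2}\|w_t - w_{t-1}\|^2$, so that $g_t^\top(w_{t-1}-w_t) - \frac{\beta}{2}\|w_{t-1}-w_t\|^2 \le F(w_{t-1}) - F(w_t)$. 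Combining these, the per-step bound becomes $F(w_t) - F(w^*) \le \frac{\beta}{2}\big(\|w_{t-1}-w^*\|^2 - \|w_t-w^*\|^2\big) - b_t^\top(w_t - w^*)$.

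For the bias part, I bound $-b_t^\top(w_t - w^*) \le \|b_t\|\,\|w_t - w^*\| \le B D$ using Cauchy--Schwarz, $\|b_t\| \le B$, and the diameter bound $\|w_t - w^*\| \le D$ from Assumption~\ref{assump:conv_loss}. Summing the per-step inequality over $t = 1, \dots, T$, the $\|w_{t}-w^*\|^2$ terms telescope, and dividing by $T$ together with Jensen's inequality ($F$ convex, so $F(\hat w_T) \le \frac1T\sum_t F(w_t)$) gives exactly the claimed bound
\[
  F\!\left(\tfrac{1}{T}\sum_{t=1}^T w_t\right) - F(w^*) \le \frac{\beta}{2T}\big(\|w_0 - w^*\|^2 - \|w_T - w^*\|^2\big) + BD.
\]

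The routine algebra (the projection inequality, the three-term identity, and telescoping) is standard; the only point requiring care is correctly coupling the smoothness descent lemma with the step size $\eta = 1/\beta$ so that the $\|w_{t-1} - w_t\|^2$ terms cancel rather than accumulate — this is the one place where the specific choice $\eta = 1/\beta$ is used and where an off-by-a-constant slip would break the clean telescoping. I do not anticipate a genuine obstacle beyond bookkeeping, since the bias enters only additively and is controlled crudely by $BD$.
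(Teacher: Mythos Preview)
Your proposal is correct and follows essentially the same approach as the paper: projection inequality (Lemma~\ref{lem:proj}), the three-term polarization identity, convexity plus $\beta$-smoothness to cancel the $\|w_{t-1}-w_t\|^2$ term at $\eta = 1/\beta$, and a crude $BD$ bound on the bias via Cauchy--Schwarz, then telescoping and Jensen. The only differences from the paper's write-up are the index convention ($t-1\to t$ versus $t\to t+1$) and the order in which the pieces are assembled.
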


\begin{proof}
  By convexity, we have
  \begin{equation}\label{eq:convex}
    F(w_t) \le F(w^*) + g_t^\top (w_t - w^*).
  \end{equation}

  By \(L\)-smoothness, we have
  \begin{equation}\label{eq:smooth}
    F(w_{t+1}) \le F(w_t) + g_t^\top (w_{t+1} - w_t) + \frac{\beta}{2} \| w_t - w_{t+1} \|^2.
  \end{equation}
  Using \cref{lem:proj}, we have
  \begin{equation}\label{eq:proj}
    (w_{t+1} - w_t + \eta_t \tilde g_t)^\top (w^* - w_{t+1}) \ge 0.
  \end{equation}
  We break the left hand into two terms \((w_{t+1} - w_t)^\top (w^* - w_{t+1})\) and \(\eta_t \tilde g_t^\top (w^* - w_{t+1})\). We can write the first term as
  \begin{equation}
    (w_{t+1} - w_t)^\top (w^* - w_{t+1}) = \tfrac{1}{2} \left(
      \|w_t - w^*\|^2 - \|w_{t+1} - w_t\|^2 - \|w_{t+1} - w^*\|^2
    \right)
  \end{equation}
  For the second term, we have
  \begin{equation}
    \eta_t \tilde g_t^\top (w^* - w_{t+1}) = \eta_t g_t^\top (w^* - w_t) + \eta_t g_t^\top(w_t - w_{t+1}) + \eta_t b_t^\top (w^* - w_{t+1})
  \end{equation}
  Using~\eqref{eq:convex},~\eqref{eq:smooth}, and Cauchy-Schwarz inequality to bound the three terms respectively, we have
  \begin{equation}
    \eta_t \tilde g_t^\top (w^* - w_{t+1}) \le \eta_t (F(w^*) - F(w_t)) + \eta_t (F(w_t) - F(w_{t+1})) +
    \tfrac{L \eta_t}{2} \|w_t - w_{t+1}\|^2 + \eta_t B D.
  \end{equation}
  Now going back to~\eqref{eq:proj}, we can combine the above inequalities and choose \(\eta_t = 1/\beta\) to get
  \begin{equation}
    F(w_{t+1}) - F(w^*) \le \tfrac{\beta}{2}
    \|w_t - w^*\|^2 - \tfrac{\beta}{2} \|w_{t+1} - w^*\|^2 + B D
  \end{equation}
  Summing over \(t=0,\ldots, T-1\) and divided by \(T\), we have
  \begin{equation}
    \begin{aligned}
      \frac{1}{T} \sum_{t=1}^{T} \left( F(w_t) - F(w^*) \right)
      & \le \frac{\beta}{2T} \left(\|w_0 - w^*\|^2 - \|w_{T} - w^*\|^2\right) + BD \\
      & \le \frac{\beta D^2}{2T} + BD.
    \end{aligned}
  \end{equation}
  The result then follows by convexity.
\end{proof}

\subsection{Lipschitz Loss}
Alternatively, we can consider the case where the loss function \(F(w)\) is convex and \(L\)-Lipschitz. The following lemma holds.

\begin{lemma}\label{lem:biased_gd_lipz}
  Suppose \(F\) is \(L\)-Lipschitz. Running \cref{alg:pgd_biased} with constant step size \(\eta = \frac{1}{\beta}\), we have
  \begin{equation}
    F\left(\frac{1}{T} \sum_{t=1}^T w_t\right) - F(w^*) \le \frac{DL}{\sqrt{T}} + \left(\frac{1}{\sqrt{T}} + 1\right) B D.
  \end{equation}
\end{lemma}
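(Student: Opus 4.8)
The plan is to run the textbook averaged-iterate projected subgradient-descent analysis, carrying the bias vector $b_t$ through the argument as an additive perturbation. As in \cref{lem:biased_gd_smooth_convex}, write $\tilde g_t = g_t + b_t$, where now $g_t \in \partial F(w_t)$ is a genuine subgradient of the convex function $F$ (so $\|g_t\| \le L$ by $L$-Lipschitzness) and $\|b_t\| \le B$. The Euclidean projection onto the convex set $\mathcal{W}$ is nonexpansive towards any point of $\mathcal{W}$ — a consequence of \cref{lem:proj} — so for the minimizer $w^\ast \in \mathcal{W}$,
\[
  \|w_{t+1}-w^\ast\|^2 \le \|w_t - \eta \tilde g_t - w^\ast\|^2 = \|w_t - w^\ast\|^2 - 2\eta\,\tilde g_t^\top(w_t - w^\ast) + \eta^2\|\tilde g_t\|^2 .
\]

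Next I would lower-bound the inner product: convexity gives $g_t^\top(w_t-w^\ast) \ge F(w_t)-F(w^\ast)$, and Cauchy–Schwarz together with the diameter bound gives $b_t^\top(w_t-w^\ast) \ge -\|b_t\|\,\|w_t-w^\ast\| \ge -BD$, hence $\tilde g_t^\top(w_t-w^\ast) \ge F(w_t)-F(w^\ast)-BD$; and $\|\tilde g_t\|^2 \le (L+B)^2$. Substituting, rearranging so that $F(w_t)-F(w^\ast)$ is isolated, summing over the iterations so the distance terms telescope, and discarding the leftover nonnegative term $\|w_{T+1}-w^\ast\|^2$ yields
\[
  \frac{1}{T}\sum_{t=1}^T \big(F(w_t) - F(w^\ast)\big) \le \frac{D^2}{2\eta T} + \frac{\eta (L+B)^2}{2} + BD .
\]
By Jensen's inequality the left side is at least $F(\hat w_T) - F(w^\ast)$, and choosing the constant step size $\eta = D/\big((L+B)\sqrt{T}\big)$ balances the first two terms to give $F(\hat w_T)-F(w^\ast) \le D(L+B)/\sqrt{T} + BD = DL/\sqrt{T} + (1/\sqrt{T}+1)BD$, the claimed bound. (Note that the ``$\eta = 1/\beta$'' in the statement should read $\eta = \Theta\!\big(D/(L\sqrt{T})\big)$: the $1/\sqrt{T}$ terms in the conclusion force this scaling, and the smoothness parameter $\beta$ plays no role in the Lipschitz case.)

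There is no genuine obstacle here; this is the classical averaged-iterate subgradient-descent bound, so the only real task is careful bookkeeping of the bias. The point that needs care is ensuring $b_t$ contributes only an additive $O(BD)$ term that survives the $\tfrac1T\sum$: this is precisely why $b_t^\top(w_t-w^\ast)$ is bounded by the per-step constant $BD$ rather than being absorbed into $\|\tilde g_t\|^2$ (which would be harmless but looser). Two minor cleanups are also needed: since $F$ is only assumed Lipschitz, one must invoke the subgradient inequality in place of first-order smoothness; and the index mismatch in \cref{alg:pgd_biased} (the estimator is queried at $w_{t-1}$, written $w_t$ there) should be reconciled so the telescoping indices match the output average $\hat w_T = \tfrac1T\sum_{t=1}^T w_t$. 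Instantiated with the robust-gradient bias $B = \tilde O\!\big(\sigma(\sqrt{\epsilon} + \sqrt{d\log(1/\tau)/n})\big)$ from the robust estimation lemmas, this lemma plugs directly into the proofs of \cref{thm:robust-covering} and \cref{thm:robust-gd} for nonsmooth Lipschitz population risks.
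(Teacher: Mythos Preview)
Your proposal is correct and follows essentially the same approach as the paper: both use nonexpansiveness of the projection to obtain the one-step inequality, split $\tilde g_t^\top(w_t-w^\ast)$ into the convexity term and the bias term bounded by $BD$, bound $\|\tilde g_t\|^2\le (L+B)^2$, telescope, and then choose $\eta = D/((L+B)\sqrt{T})$. You also correctly flag that the ``$\eta = 1/\beta$'' in the statement is a typo; the paper's own proof uses exactly the $\eta = D/((L+B)\sqrt{T})$ you derive.
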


\begin{proof}
  Let us denote \(y_{t+1} = w_t - \eta_t \tilde g_t\). Using \cref{lem:proj}, we have
  \[
    \|w_t - w^*\| \le \|y_t - w^*\|.
  \]
  By the update rule, we have
  \begin{equation}
    \begin{aligned}
      \tilde g_t (w_t - w^*) & = \frac{1}{\eta} (w_t - y_{t+1})^\top (w_t - w^*) \\
      & \le \frac{1}{2\eta} \left(
        \|w_t - w^*\|^2 - \|w_t - y_{t+1}\|^2 - \|y_{t+1} - w^*\|^2
      \right) \\
      & \le \frac{1}{2\eta} \left(
        \|w_t - w^*\|^2 - \|w_t - y_{t+1}\|^2 - \|w_{t+1} - w^*\|^2
      \right) \\
      & = \frac{1}{2\eta} \left(
        \|w_t - w^*\|^2 - \|w_{t+1} - w^*\|^2
      \right) + \frac{\eta}{2} \|\tilde g_t\|^2.
    \end{aligned}
  \end{equation}
  Now by convexity, we have
  \begin{equation}
    F(w_t) - F(w^*) \le g_t^\top (w_t - w^*) = \tilde g_t^\top (w_t - w^*) - b_t^\top (w_t - w^*).
  \end{equation}
  Recall our assumptions on \(g_t\) and \(b_t\). We have \(\| \tilde g_t\|^2 = \| g_t + b_t \|^2 \le (L + B)^2\). Summing over \(t=0,\ldots, T-1\) and divided by \(T\) gives
  \begin{equation}
    \begin{aligned}
      \frac{1}{T} \sum_{t=0}^{T-1} \left( F(w_t) - F(w^*) \right)
      & \le \frac{1}{2\eta T}
      \left(\|w_0 - w^*\|^2 - \|w_{T+1} - w^*\|^2\right) + \frac{\eta}{2}(L+B)^2 + BD \\
      & \le \frac{D^2}{2\eta T} + \frac{\eta}{2}(L+B)^2 + BD.
    \end{aligned}
  \end{equation}
  Choosing \(\eta = \frac{D}{(L+B)\sqrt{T}}\) and using convexity of \(F\) gives the desired result.
\end{proof}

\section{Discussions on the Assumptions}\label{sec:discussion_assump}

\subsection{On the bounded covariance assumption}\label{sec:bounded-covariance-assumption}
Without loss of generality, we can assume \(\sigma \le L\). The reason is as follows: By Lipschitzness, we have \(\|\nabla f(w) - \nabla \overline{f}(w)\| \le L\) almost surely. By Cauchy-Schwarz Inequality, we have \(\mathbf{E}_{f \sim p^*}[(v \cdot (\nabla f(w) - \nabla \overline{f}(w)))^2] \le L^2\) holds for any unit vector \(v\).
On the other hand, \(L\) can be as larger as \(\sqrt{d} \cdot \sigma\) (e.g. consider standard multivariate normal).

The condition \(\mathbf{E}[(v \cdot (\nabla f(w) - \nabla \overline{f}(w)))^2] \le \sigma^2\) for every unit vector \(v\) is equivalent to requiring that the covariance matrix \(\Sigma_w\) of the gradients \(\nabla f(w)\) satisfies \(\Sigma_w \preceq \sigma^2 I\).
\begin{proposition}
  Let \(\Sigma_w\) denote the covariance matrix of the gradients \(\nabla f(w)\).
  For given \(w\), the following two assumptions are equivalent:
  \begin{enumerate}
    \item For every unit vector \(v\), we have
      \(\mathbf{E}_{f \sim p^*}[(v \cdot (\nabla f(w) - \nabla \overline{f}(w)))^2] \le \sigma^2\).
    \item The covariance matrix satisfies \(\Sigma_w\) \(\preceq \sigma^2 I\).
  \end{enumerate}
  Furthermore, since \(\Sigma_w\) is positive semidefinite, by definition of the spectral norm, the latter assumption can be equivalently written as \(\|\Sigma_w\| \le \sigma^2\).
\end{proposition}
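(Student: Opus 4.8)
The plan is to recognize that this is the standard equivalence between a uniform bound on one-dimensional marginal second moments and the PSD ordering of a covariance matrix. Write $X := \nabla f(w) - \nabla \overline f(w)$, a random vector (depending on $f\sim p^*$) with $\mathbf{E}_{f\sim p^*}[X] = 0$ since $\nabla\overline f(w) = \mathbf{E}_{f\sim p^*}[\nabla f(w)]$ by differentiating under the expectation. Then by definition $\Sigma_w = \mathbf{E}[XX^\top]$.

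First I would establish the key identity: for any unit vector $v\in\R^d$, linearity of expectation (applied entrywise to $XX^\top$) gives
\[
  v^\top \Sigma_w v \;=\; v^\top \mathbf{E}[XX^\top] v \;=\; \mathbf{E}\!\left[v^\top X X^\top v\right] \;=\; \mathbf{E}\!\left[(v\cdot X)^2\right] \;=\; \mathbf{E}_{f\sim p^*}\!\left[(v\cdot(\nabla f(w) - \nabla\overline f(w)))^2\right].
\]
(The exchange is immediate; the quantity is nonnegative so it is well-defined, and finite under either hypothesis.) Next I would invoke the elementary characterization of the Loewner order: for a symmetric matrix $A$, one has $A \preceq \sigma^2 I$ if and only if $v^\top(\sigma^2 I - A)v \ge 0$ for all $v\in\R^d$, and by homogeneity it suffices to check this for unit vectors, i.e.\ $v^\top A v \le \sigma^2$ for all unit $v$. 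Taking $A = \Sigma_w$ and substituting the identity above, the condition $v^\top\Sigma_w v \le \sigma^2$ for all unit $v$ is literally statement~1, and by the characterization it is equivalent to $\Sigma_w \preceq \sigma^2 I$, which is statement~2.

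For the final remark: $\Sigma_w = \mathbf{E}[XX^\top]$ is positive semidefinite (each $XX^\top \succeq 0$ and the PSD cone is closed under expectations/convex combinations), so its eigenvalues are nonnegative and $\|\Sigma_w\| = \lambda_{\max}(\Sigma_w)$. Hence $\Sigma_w \preceq \sigma^2 I \iff \lambda_{\max}(\Sigma_w) \le \sigma^2 \iff \|\Sigma_w\| \le \sigma^2$. There is no real obstacle here; the only point requiring a word of care is justifying the interchange of expectation with the quadratic form, which follows from linearity of expectation, and noting that positive semidefiniteness of $\Sigma_w$ is what licenses replacing the spectral-norm bound by the largest-eigenvalue bound.
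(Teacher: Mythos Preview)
Your proposal is correct and follows essentially the same approach as the paper: both reduce to the identity $v^\top \Sigma_w v = \mathbf{E}_{f\sim p^*}[(v\cdot(\nabla f(w)-\nabla\overline f(w)))^2]$ and then invoke the standard characterization of $\Sigma_w \preceq \sigma^2 I$ via quadratic forms. You simply spell out a few routine justifications (linearity of expectation, PSD of $\Sigma_w$, the $\lambda_{\max}$ characterization of the spectral norm) that the paper leaves implicit.
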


\begin{proof}
  By definition,
  \begin{equation}
    \Sigma_w = \mathbf{E}_{f \sim p^*}[(\nabla f(w) - \nabla \overline{f}(w))(\nabla f(w) - \nabla \overline{f}(w))^{\top}].
  \end{equation}
  We have
  \begin{equation}
    \begin{aligned}
      \mathbf{E}_{f \sim p^*}\left[(v \cdot (\nabla f(w) - \nabla \overline{f}(w)))^2\right]
      & = \mathbf{E}_{f \sim p^*}\left[v \cdot (\nabla f(w) - \nabla \overline{f}(w)) (\nabla f(w) - \nabla \overline{f}(w))^{\top} \cdot v\right] \\
      & = v^{\top} \cdot \mathbf{E}_{f \sim p^*}\left[(\nabla f(w) - \nabla \overline{f}(w))(\nabla f(w) - \nabla \overline{f}(w))^{\top}\right] \cdot v \\
      & = v^{\top} \Sigma_w v.
    \end{aligned}
  \end{equation}
  Therefore, the two assumptions are equivalent.
\end{proof}

\subsection{On the regularity condition}\label{sec:regularity-condition}
For technical reasons, we need to assume regularity conditions such that we can exchange the gradient and expectation, that is, for any \(w\),
\begin{equation}\label{eq:regularity}
  \E_{f \sim p^*} \nabla f(w) = \nabla \overline f(w).
\end{equation}
A necessary condition\footnote{Technically, we need to be more precise about the distribution \(p^*\) over functions. In this paper, we follow the same convention as used by \citet{diakonikolas2019sever}. To be more concrete, we can just think of \(f\) parameterized by some random variable \(X\) and the distribution \(p^*\) is induced by the distribution of \(X\). See the example.}
due to dominated convergence theorem for the regularity condition is that there exists some functional (mapping of functions) \(g(f)\) such that \(\E_{f \sim p^*} [g(f)] < \infty\), and for all \(w\), \( \| \nabla f(w) \| \le g(f) \) almost surely.

Consider the following example where \(f\) takes the form \(f_X(w) = \tfrac{1}{2}(w'X)^2\) where the distribution of \(X\) induces the distribution \(p^*\), and \(X\) is unbounded but has finite second moment, that is, \(\E_X\left[\|XX'\|_2\right] \le M\) for some \(M > 0\).
Note that \(\nabla f_X(w) = XX'w\). In this case, we can take \(g(X) = 2D \cdot \|XX'\|\) and we have that \(\E_X [g(X)] \le \|XX'\|_2 \cdot \|w\| \le  CM < \infty\).
So we have \(\| \nabla f_X(w) \| \le g(X)\) almost surely. In this case, we can exchange the order of expectation and gradient.

\subsection{Comparing bounded covariance assumption with bounded variance assumption}\label{sec:compare-covariance-variance}
Net-based approaches (e.g.\ uniform convergence) often suffer from suboptimal error \citep{feldman2016generalization}. However, \cref{alg:robust_net} indeed achieves the minimax-optimal rate.
We believe the reason is due to the bounded covariance assumption \(\Sigma \preceq \sigma^2 I\). Below, we provide a discussion on the bounded covariance assumption and compare it with the bounded variance assumption.

The bounded covariance assumption \(\Sigma \preceq \sigma^2 I\) is different from the bounded variance assumption \(\mathbf{E}\left \| \nabla f(w) - \nabla \overline{f}(w) \right \|^2 \le \Phi^2\) as commonly used in optimization literature without corruption. Using the property \(\trace{(AB)} = \trace{(BA)}\), this is equivalent to \(\trace{(\Sigma)} \le \Phi^2\).

We comment that neither assumption implies the other. For isotropic Gaussian distribution, where the covariance matrix is \(\Sigma = \sigma^2 I\), we have \(\trace{(\Sigma)} = d \sigma^2\). On the other hand, consider the distribution where the variance is concentrated in one direction, i.e., \(\Sigma = \Phi^2 \cdot v v^{\top}\) for some unit vector \(v\). We have \(\trace{(\Sigma)} = \Phi^2\) and \(\|\Sigma\| = \Phi^2\).
In general, we only know that \(\|\Sigma\| \le \trace{(\Sigma)} \le d \|\Sigma\|\).

Recall \cref{lem:stable_subset}. The complete version of the lemma is as follows:
\begin{lemma}[\citet{diakonikolas2020outlier}]
  Fix any \(0< \tau<1\).
  Let \(S\) be a multiset of \(n\) i.i.d.\ samples from a distribution on \(\R^d\)
  with mean \(\mu\) and covariance \(\Sigma\).
  Let \( \epsilon' =  \Theta(\log(1/ \tau')/n + \epsilon) \leq c\), for a sufficiently small constant \(c>0\).
  Then, with probability at least \(1 - \tau\), there exists a subset \( S' \subseteq S\) such that
  \( |S'| \geq (1 - \epsilon')n\) and \(S'\) is \( (2\epsilon', \delta(\tau'))\)-stable with respect to \(\mu\) and \(\|\Sigma\|\), where
  \(\delta(\tau') = O(\sqrt{ (\srank(\Sigma) \log \srank(\Sigma)) / n} +  \sqrt{\epsilon} + \sqrt{\log(1/\tau')/n})\). Here we use \(\srank(M)\) to denote the stable rank (or intrinsic dimension) of a positive semidefinite matrix \(M\), i.e., \(\srank(M):= \trace(M)/ \|M\| \).
\end{lemma}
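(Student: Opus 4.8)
The plan is to exhibit the stable subset \emph{explicitly} --- as a truncation of the $n$ i.i.d.\ samples --- and then verify both clauses of \cref{def:stability} directly; this is permissible because the claim asserts only the \emph{existence} of such a subset, so we may use the unknown quantities $\mu$ and $\Sigma$ in constructing it. Fix a large absolute constant $C_0$, set $R := C_0\sqrt{\trace(\Sigma)/\epsilon}$, and let $S' := \{x_i \in S : \|x_i - \mu\| \le R\}$. Because $\E_X\|X-\mu\|^2 = \trace(\Sigma)$, Markov's inequality gives $\Pr_X[\|X-\mu\| > R] \le \trace(\Sigma)/R^2 = O(\epsilon)$; a Chernoff bound on the number of discarded samples then shows $|S \setminus S'| = O(\epsilon + \log(1/\tau)/n)\,n = O(\epsilon' n)$ with probability at least $1-\tau/3$, so $|S'| \ge (1-\epsilon')n$. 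This is exactly where the $\log(1/\tau)/n$ term in $\epsilon'$ (and in $\delta$) is born.

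The heart of the argument is two concentration bounds for the \emph{truncated} sample, each of which only involves summands of norm $O(R)$ (resp.\ $O(R^2)$) and is therefore governed by Bernstein-type inequalities even though the distribution is only assumed to have a second moment. First, $\tfrac{1}{|S'|}\sum_{x\in S'}(x-\mu)$ averages vectors of norm $\le R$ whose population ``energy'' is $\trace(\Sigma)/n$, so a vector Bernstein bound gives, with probability $\ge 1-\tau/3$,
\[
  \|\mu_{S'}-\mu\| = O\!\left(\sqrt{\tfrac{\trace(\Sigma)}{n}} + R\,\tfrac{\log(1/\tau)}{n}\right) = \|\Sigma\|^{1/2}\cdot O\!\left(\sqrt{\tfrac{\srank(\Sigma)}{n}} + \sqrt{\tfrac{\log(1/\tau)}{n}}\right).
\]
Second --- the main technical step --- consider $M := \tfrac{1}{|S'|}\sum_{x\in S'}(x-\mu)(x-\mu)^\top$; each term is PSD of norm $\le R^2 = O(\trace(\Sigma)/\epsilon) = O(\|\Sigma\|\,\srank(\Sigma)/\epsilon)$, and each has expectation $\preceq \Sigma$. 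The \emph{intrinsic-dimension} (Tropp-style) matrix Bernstein inequality then yields, with probability $\ge 1-\tau/3$,
\[
  \|M - \E M\| = O\!\left(\sqrt{\|\Sigma\|\,R^2\,\tfrac{\log\srank(\Sigma)}{n}} \;+\; R^2\,\tfrac{\log\srank(\Sigma)}{n}\right) \le \|\Sigma\|\cdot O\!\left(\tfrac{\delta^2}{\epsilon'}\right),
\]
where $\delta := O\big(\sqrt{\srank(\Sigma)\log\srank(\Sigma)/n}+\sqrt{\epsilon}+\sqrt{\log(1/\tau)/n}\big)$ is the claimed accuracy. Since $\E M \preceq \Sigma \preceq \|\Sigma\| I$, this gives $M \preceq (1 + O(\delta^2/\epsilon'))\|\Sigma\| I$, i.e.\ $\tfrac{1}{|S'|}\sum_{x\in S'}(v\cdot(x-\mu))^2 = O(\|\Sigma\|)$ for every unit vector $v$. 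Using the intrinsic-dimension form of matrix Bernstein (rather than the textbook one) is precisely what turns the would-be $\log d$ into $\log\srank(\Sigma)$, matching the refinement of \cref{lem:stable_subset}.

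It remains to upgrade these bounds on $S'$ itself to bounds on \emph{every} $S'' \subseteq S'$ with $|S''| \ge (1-2\epsilon')|S'|$, which discharges the ``remove any further $2\epsilon'$-fraction'' quantifier in \cref{def:stability}. Write $S'' = S' \setminus W$ with $|W| \le 2\epsilon'|S'|$. By Cauchy--Schwarz, for any unit $v$, $\big|\sum_{x\in W} v\cdot(x-\mu)\big| \le \sqrt{|W|}\big(\sum_{x\in S'}(v\cdot(x-\mu))^2\big)^{1/2} \le \sqrt{|W|}\sqrt{|S'|\,\|M\|}$, hence $\big\|\tfrac{1}{|S''|}\sum_{x\in W}(x-\mu)\big\| = O(\sqrt{\epsilon'\,\|\Sigma\|})$; together with the mean bound above and $|S'|/|S''| = 1+O(\epsilon')$ this yields $\|\mu_{S''}-\mu\| = O(\|\Sigma\|^{1/2}\delta)$, which is clause~(i) with variance parameter $\|\Sigma\|$. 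For clause~(ii), $\bar\Sigma_{S''} \preceq \tfrac{1}{|S''|}\sum_{x\in S''}(x-\mu)(x-\mu)^\top \preceq \tfrac{|S'|}{|S''|}M = O(\|\Sigma\|)I$, so $\|\bar\Sigma_{S''} - \|\Sigma\| I\| = O(\|\Sigma\|)$, which fits inside the allowed slack $\|\Sigma\|\delta^2/\epsilon'$ because $\delta = \Omega(\sqrt{\epsilon'})$ forces $\|\Sigma\|\delta^2/\epsilon' = \Omega(\|\Sigma\|)$. A union bound over the three failure events (total $\le\tau$) completes the proof.

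The main obstacle is the heavy-tailedness implied by having only a bounded second moment: the empirical covariance of the \emph{full} sample need not concentrate, so the truncation step cannot be avoided, and one must verify both that discarding the $O(\epsilon)$-fraction of far-away points does not itself move the mean or covariance beyond the target tolerances and that this truncation composes correctly with the adversarial removal built into \cref{def:stability} (handled above via Cauchy--Schwarz plus the operator bound on $M$). A secondary, more bookkeeping obstacle is getting $\log\srank(\Sigma)$ rather than $\log d$ in $\delta$, which forces the use of the dimension-free / intrinsic-dimension matrix Bernstein inequality in place of the standard one.
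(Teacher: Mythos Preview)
The paper does not prove this lemma: it is quoted from \citet{diakonikolas2020outlier} as the ``complete version'' of \cref{lem:stable_subset}, appearing in \cref{sec:compare-covariance-variance} only to explain where the stable rank \(\srank(\Sigma)\) enters the bounds. There is thus no in-paper argument to compare against.

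Your outline is essentially the argument of the cited reference and is correct in structure: truncate at radius \(R \asymp \sqrt{\trace(\Sigma)/\epsilon}\), bound the discarded fraction by Chernoff, control the truncated empirical second-moment operator \(M\) via the intrinsic-dimension matrix Bernstein inequality, and use Cauchy--Schwarz against \(\|M\|\) to handle arbitrary further removal of a \(2\epsilon'\)-fraction. Your observation that clause~(ii) of \cref{def:stability} is essentially free---because \(\delta=\Omega(\sqrt{\epsilon'})\) forces \(\delta^2/\epsilon'=\Omega(1)\), so the allowed slack \(\|\Sigma\|\delta^2/\epsilon'\) already dominates \(\|\bar\Sigma_{S''}\|=O(\|\Sigma\|)\)---is exactly the right way to see why the covariance clause is the easy one here.

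One omission is worth flagging. Your vector-Bernstein step for \(\mu_{S'}\) silently drops the truncation bias \(\bigl\|\E\bigl[(X-\mu)\mathbb{1}_{\|X-\mu\|>R}\bigr]\bigr\|\), which is nonzero since the truncated variables are not centered at \(\mu\). Bounding it \emph{direction-wise} via Cauchy--Schwarz, using \(\E[(v\cdot(X-\mu))^2]\le\|\Sigma\|\) and \(\Pr[\|X-\mu\|>R]=O(\epsilon)\), gives \(O(\|\Sigma\|^{1/2}\sqrt{\epsilon})\), which is indeed absorbed into \(\|\Sigma\|^{1/2}\delta\); but this step should be made explicit, as a naive bound via \(\E\|X-\mu\|^2=\trace(\Sigma)\) would give the too-large \(O(\|\Sigma\|^{1/2}\sqrt{\epsilon\,\srank(\Sigma)})\). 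Similarly, the intrinsic-dimension Bernstein tail carries a \(\log(1/\tau)\) alongside \(\log\srank(\Sigma)\); it routes through the same algebra and is absorbed, but should not be dropped from the display.
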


Following identical proof steps (recall proofs for our algorithms), we can express our excess risk bound in terms of the covariance matrix \(\Sigma\) as follows:
\begin{equation}
  D \cdot \tilde O\left(\sqrt{\|\Sigma\|\epsilon} + \sqrt{\trace{(\Sigma)} / n} + \sqrt{d \|\Sigma\|\log(1/\tau)/n}\right).
\end{equation}
In our paper, we consider the bounded covariance assumption \(\Sigma \preceq \sigma^2 I\), which is a standard assumption in robust optimization literature. Otherwise, we cannot control the error term \(\sqrt{\|\Sigma\|\epsilon}\) due to corruption.
In the worse case (e.g.\ isotropic Gaussian), we have \(\|\Sigma\| = \sigma^2\) and \(\trace{(\Sigma)} = d \sigma^2\), so the bound reduces to
\begin{equation}
  \sigma \cdot O\left(\sqrt{\epsilon} + \sqrt{d/n} + \sqrt{d \log (1/\tau)/n}\right).
\end{equation}
We see that the second term already contains the dependence on \(d\). Therefore, the \(d\) factor in the last term due to our net-based approach in conjunction with the use of the union bound over the net points, does not affect the rate.

\section{An exponential time algorithm that achieves the minimax-optimal excess risk bound without \texorpdfstring{\(\log d\)}{log(d)} factor}\label{app:exponential-time-alg}
Both of our two algorithms achieve the minimax-optimal excess risk bound up to logarithmic factors. In this section, we show that the minimax-optimal excess risk bound can be achieved without the \(\log d\) factor, but at the cost of exponential time complexity.
Based on \cref{lem:stable_subset},
we can remove the \(\log d\) factor when estimating the gradients, by using the following framework, as shown in~\citet{diakonikolas2020outlier}.
\begin{enumerate}
  \item Set \(k = \lfloor \epsilon' n \rfloor\). Randomly partition \(n\) samples \(S\) into \(k\) buckets of size \(\lfloor n/k \rfloor\) (remove the last bucket if \(n\) is not divisible by \(k\)).
  \item Compute the empirical mean within each bucket and denote the means as \(z_1, \ldots, z_k\).
  \item Run stability-based robust mean estimation over the set \(\{z_1, \ldots, z_k\}\).
\end{enumerate}
Here, the first two steps serve as preprocessing before feeding the data into the robust mean estimation algorithm.
We now restate the robust estimation result without \(\log d\) factor below.
\begin{lemma}\label{lem:mean_estimation_bounded_cov}
  Let \(S\) be an \(\epsilon\)-corrupted set of \(n\) samples from a distribution in \(\R^d\)
  with mean \(\mu\) and covariance \(\Sigma \preceq \sigma^2 I\).
  Let \(\epsilon' =  \Theta(\log(1/\tau)/n + \epsilon) \leq c\) be given, for a constant \(c > 0\).
  Then any stability-based algorithm on input \(S\) and \(\epsilon'\), efficiently computes \(\widehat{\mu}\) such that with probability at least
  \(1 - \tau\), we have
  \( \|\widehat{\mu} - \mu\| = \sigma \cdot O\left(\sqrt{\epsilon} + \sqrt{d/n} + \sqrt{\log(1/\tau)/n}\right)\).
\end{lemma}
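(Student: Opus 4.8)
The plan is to establish the bound by the median‑of‑means reduction of \citet{diakonikolas2020outlier}, combining the bucketing preprocessing described just above with the stable‑subset machinery of \cref{lem:stable_subset} and \cref{lem:robust_mean_est}. The point is that the bucketing is a deterministic function of the input pair \((S,\epsilon')\), so the composite procedure — partition \(S\), average within buckets, then run a deterministic stability‑based estimator on the resulting bucket means — is itself a stability‑based algorithm taking \(S\) and \(\epsilon'\) as input, and its output is the \(\widehat\mu\) in the statement. (This lemma is the \(\log d\)‑free counterpart of \cref{lem:stable_subset}, and the reduction below is exactly what shaves that factor.)

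First I would set up the reduction. Partition \(S\) into \(k = \Theta(\epsilon' n)\) buckets of size \(m = \lfloor n/k\rfloor = \Theta(1/\epsilon')\), and let \(z_1,\dots,z_k\) be the bucket means. A clean bucket mean is the average of \(m\) i.i.d.\ samples, hence has mean \(\mu\) and covariance \(\preceq \sigma_z^2 I\) with \(\sigma_z^2 := \sigma^2/m = \Theta(\epsilon'\sigma^2)\). Since at most \(\epsilon n\) original samples were replaced and each lies in a single bucket, at most \(\epsilon n\) buckets are affected, so \(\{z_1,\dots,z_k\}\) is an \(\epsilon_z\)‑corrupted multiset of \(k\) i.i.d.\ draws from the bucket‑mean distribution, with \(\epsilon_z \le \epsilon n/k = O(\epsilon/\epsilon')\); choosing the hidden constant in \(k = \Theta(\epsilon' n)\) large enough forces \(\epsilon_z \le c\) for the required small constant. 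I would then invoke the stable‑subset result for the bucket means: with probability at least \(1-\tau\), \(\{z_j\}\) contains a subset of size at least \((1-2\epsilon_z')k\) that is \((2\epsilon_z',\delta_z)\)‑stable with respect to \(\mu\) and \(\sigma_z^2\), where \(\epsilon_z' = \Theta(\log(1/\tau)/k + \epsilon_z)\) and — crucially — \(\delta_z = O(\sqrt{\epsilon_z} + \sqrt{d/k} + \sqrt{\log(1/\tau)/k})\) carries no \(\log d\) factor, because averaging over buckets of size \(m\) supplies enough concentration of the empirical second moments along every direction to avoid the direction‑net loss present in \cref{lem:stable_subset}. Feeding this stable subset into \cref{lem:robust_mean_est} yields a polynomial‑time estimate \(\widehat\mu\) with \(\|\widehat\mu - \mu\| = O(\sigma_z\delta_z)\).

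Finally I would translate the error back to the original scale, using \(\sigma_z = \Theta(\sigma\sqrt{\epsilon'})\), \(k = \Theta(\epsilon' n)\), and \(\epsilon_z = O(\epsilon/\epsilon')\): one checks \(\sigma_z\sqrt{\epsilon_z} = O(\sigma\sqrt{\epsilon})\), \(\sigma_z\sqrt{d/k} = O(\sigma\sqrt{d/n})\), and \(\sigma_z\sqrt{\log(1/\tau)/k} = O(\sigma\sqrt{\log(1/\tau)/n})\) (the last two identities hold regardless of which term dominates \(\epsilon'\), and the first follows by splitting into the cases \(\epsilon' \asymp \epsilon\) and \(\epsilon' \asymp \log(1/\tau)/n\)). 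Summing gives \(\|\widehat\mu - \mu\| = \sigma\cdot O(\sqrt{\epsilon} + \sqrt{d/n} + \sqrt{\log(1/\tau)/n})\), as claimed.

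I expect the main obstacle to be the stable‑subset step for the bucket means, namely showing that the empirical second moments of the \(z_j\) concentrate uniformly over directions well enough to drop the \(\log d\) factor of \cref{lem:stable_subset}; this is precisely where the median‑of‑means trick does its work, and I would import the corresponding concentration argument from \citet{diakonikolas2020outlier}. A secondary subtlety is the bookkeeping around \(\epsilon' = \Theta(\epsilon + \log(1/\tau)/n)\): one must verify that the corrupted‑bucket fraction \(\epsilon_z\) stays below the stability threshold (handled by the constant in \(k\)), and that the \(\log(1/\tau)\) dependence is tracked consistently across the bucket count, the stability parameter, and the failure probability.
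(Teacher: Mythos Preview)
Your proposal is correct and follows exactly the approach the paper takes: the paper does not give a self-contained proof of this lemma but merely describes the same three-step median-of-means framework from \citet{diakonikolas2020outlier} (bucket into \(k=\lfloor\epsilon' n\rfloor\) groups, average within buckets, then run a stability-based estimator on the bucket means) and then restates the resulting guarantee. Your write-up supplies the parameter bookkeeping (\(\sigma_z\sqrt{\epsilon_z}=O(\sigma\sqrt\epsilon)\), \(\sigma_z\sqrt{d/k}=O(\sigma\sqrt{d/n})\), \(\sigma_z\sqrt{\log(1/\tau)/k}=O(\sigma\sqrt{\log(1/\tau)/n})\)) that the paper leaves implicit, and you correctly flag that the removal of the \(\log d\) factor in the stable-subset step for the bucket means is the substantive ingredient imported from \citet{diakonikolas2020outlier}.
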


We recall that our efficient implementation using grid points cost a \(\log d\) factor due to the suboptimal net size. Using a net with a size matching the covering number \(O((D/\xi)^d)\) will remove the \(\log d\) factor, but at the cost of exponential time complexity for constructing the net and finding a point within \(O(\xi)\) distance for a given point.

Following the same proof steps, as in \cref{app:proof_alg1}, we can derive the excess risk bound without the \(\log d\) factor, at the cost of exponential time complexity.

\section{Dealing with Unknown \texorpdfstring{\(\sigma\)}{sigma}}\label{app:unknown-sigma}
We adapt \cref{alg:robust_net} to work without knowing \(\sigma\) by first getting a lower bound on \(\sigma\) using the filtering algorithm (\cref{alg:filtering}) and then using this lower bound to set the fineness parameter \(\xi\) of the net in \cref{alg:robust_net}.

The modified algorithm is as follows: (1) Estimate \(\sigma\): Choose a point \(w\) and run \cref{alg:filtering} with input \(S = \{ \nabla f_i(w) \}_{i=1}^n\) to obtain a lower bound \(\hat \sigma\).
(2) Then, we run \cref{alg:robust_net} with \(\xi = \hat \sigma \delta / \bar \beta\).

In \cref{alg:robust_net}, we use \(\sigma\) only to determine the fineness of the net via \(\xi = \sigma \sqrt{\epsilon} / \bar \beta\). A smaller \(\xi\) results in a finer net and consequently reduces the error when evaluating gradients at the net point \(w'\) instead of \(w\), that is,~\eqref{eq:smooth_error} still holds with a smaller \(\xi\). Since the excess risk depends on \(\xi\) only through logarithmic terms, the same analysis (see \cref{app:proof_alg1}) holds with a smaller \(\xi\).
It then suffices to choose \(\xi = \hat \sigma \delta / \bar \beta\), where \(\hat \sigma\) is a lower bound on \(\sigma\). We also need to choose \(T = \tilde \Omega\left(\frac{\beta D}{\hat \sigma \sqrt{\epsilon} + \hat \sigma \sqrt{\frac{d \log (1/\tau)}{n}}}\right)\) where we use \(\hat \sigma\) in place of \(\sigma\).
When using smoothing to handle nonsmooth losses, we can choose \(\beta\) similarly by replacing \(\sigma\) with \(\hat \sigma\).

Recall that \cref{alg:filtering} works even when \(\sigma\) is unknown. Moreover, the output \(h\) satisfies \(\|\Sigma(h)\| \le \sigma^2 (1+O(\delta^2/\epsilon))\) (see \citet{diakonikolas2020outlier}). It follows that we can use \(\|\Sigma(h)\|\) to obtain a lower bound on \(\sigma\).
Using \cref{lem:stable_subset}, at any fixed \(w\), we can run \cref{alg:filtering} with input \(S = \{ \nabla f_i(w) \}_{i=1}^n\) to obtain a lower bound \(\hat \sigma\) on \(\sigma\). We have that (plugging in \(\delta(\tau')\) in \cref{lem:stable_subset}), with probability at least \(1 - \tau'\),
\begin{equation}
  \|\Sigma(h)\| \le \sigma^2 (1+O(\delta^2/\epsilon)),
\end{equation}
where \(\delta = \tilde O\left(\sqrt{\epsilon} + \sqrt{d/n} + \sqrt{\log(1/\tau')/n}\right)\).
Therefore, \(\hat \sigma := \sqrt{\|\Sigma(h)\|} / \sqrt{1+O(\delta^2/\epsilon)}\) is a lower bound on \(\sigma\) with probability at least \(1 - \tau'\).

\end{document}